\newcommand{\remove}[1]{{}}
\renewcommand{\c}{\mathcal}
\renewcommand{\hat}{\widehat}
\renewcommand{\tilde}{\widetilde}
\renewcommand{\u}[1][]{\ifthenelse{\equal{#1}{}}{\mathbf{w}}{\mathbf{w}^{(#1)}}}
\newcommand{\w}[1][]{\ifthenelse{\equal{#1}{}}{\boldsymbol{\beta}}{\boldsymbol{\beta}^{(#1)}}}
\newcommand{\z}[1][]{\ifthenelse{\equal{#1}{}}{\mathbf{z}}{\mathbf{z}^{(#1)}}}
\newcommand{\h}[1][]{\ifthenelse{\equal{#1}{}}{{\mathbf{h}}}{{\mathbf{h}^{(#1)}}}}
\newcommand{\bdelta}[1][]{\ifthenelse{\equal{#1}{}}{{\boldsymbol{\delta}}}{\boldsymbol{\delta}^{(#1)}}}
\newcommand{\brho}[1][]{\ifthenelse{\equal{#1}{}}{{\boldsymbol{\rho}}}{\boldsymbol{\rho}^{(#1)}}}
\newtheorem*{assumption*}{Assumptions}
\newtheorem*{definition*}{Definition}
\newtheorem{theorem}{Theorem}
\newtheorem{lemma}[theorem]{Lemma}
\title{Local Convolutions Cause an Implicit Bias towards High Frequency Adversarial Examples}
\author[1,2,+]{Josue Ortega Caro}
\author[3]{Yilong Ju}
\author[1]{Ryan Pyle}
\author[4]{Sourav Dey}
\author[5]{Wieland Brendel}
\author[1]{Fabio Anselmi}
\author[1,6]{Ankit B Patel}
\affil[1]{Department of Neuroscience\\
   Baylor College of Medicine\\
   Houston, TX, 77030}
\affil[2]{Quantitative and Computational Bioscience Program \\
   Baylor College of Medicine \\
   Houston, TX, 70030}
\affil[3]{Department of Computer Science \\
   Rice University \\
   Houston, TX, 77005}
\affil[4]{ Manifold AI}
\affil[5]{University of Tübingen \\
   Germany \\}
\affil[6]{Department of Electrical and Computer Engineering \\
   Rice University \\
   Houston, TX, 70005}
\affil[+]{Correspondence to: josue.ortegacaro@yale.edu}
\begin{document}

\maketitle

\begin{abstract}
Adversarial Attacks are still a significant challenge for neural networks. Recent efforts has shown that adversarial perturbations typically contain high-frequency features, but the root cause of this phenomenon remains unknown. Inspired by theoretical work on linear full-width convolutional models, we hypothesize that convolutional operations with \emph{localized kernels} are \emph{implicitly biased to learn high frequency features}, and that this is one of the root causes of \emph{high frequency adversarial examples}. To test this hypothesis, we analyzed the impact of different choices of linear and \textit{nonlinear} architectures on the implicit bias of the learned features and the adversarial perturbations, in spatial and frequency domains. We find that, independently of the training dataset, convolutional operations have higher frequency adversarial attacks compared to other architectural parametrizations, and this phenomenon is exacerbated with stronger locality of the convolutional kernels. The explanation for the kernel size dependence involves the Fourier Uncertainty Principle: a spatially-limited (local in the space domain) filter cannot also be frequency-limited (local in the frequency domain).
Using larger convolution kernel sizes or avoiding convolutions (e.g. by using Vision Transformers or MLP-style architectures) significantly reduces this high frequency bias. 

Looking forward, our work strongly suggests that understanding and controlling the implicit bias of architectures will be essential for achieving adversarial robustness. 
\end{abstract}

\section{Introduction}\label{sec:intro}
Despite the enormous progress in training neural networks to solve hard tasks, they remain surprisingly and stubbornly sensitive to imperceptibly small worst-case perturbations known as \textit{adversarial examples}. There has been a significant amount of work regarding the nature and structure of adversarial examples \cite{gilmer2018adversarial, mahloujifar2019curse,tanay2016boundary,ford2019adversarial,fawzi2018adversarial,bubeck2018adversarial,goodfellow2014explaining,schmidt2018adversarially,ilyas2019adversarial}. One particular experimental observation is that, oftentimes,  adversarial examples show a large  amount of energy content in the high frequencies \cite{yin2019fourier}. Furthermore, previous work has shown that adversarial examples are not just random perturbations of the input space but they contain dataset-specific information which is informative of class decision boundaries \cite{ilyas2019adversarial}. A natural question is then: \emph{does the concentration of energy in the high frequencies reflect some features of the task?} Wang et al., 2020 showed that high frequency features are necessary for high generalization performance of different models trained on CIFAR10 \cite{wang2020high}. They argue that learning these high frequency features is effectively a data-wise phenomenon, showing that models that use lower-frequency features had lower accuracy on CIFAR10. These results point out that the presence of high frequency features are conditioned to their usefulness in this dataset. Moreover, Maiya et al., 2021 provided evidence that different datasets produced adversarial examples with different concentration of energy in frequency domain which are related to the dataset statistics \cite{maiya2021frequency}. Furthermore, previous work has also shown that sensitivity to certain frequency based features can be altered by decreasing the reliability of those features in the dataset via data augmentations~\cite{li2022robust,hermann2020origins,geirhos2018imagenet}.  All together these results suggest that feature selection, and specifically high frequency features, are mainly driven by dataset statistics.

However, even though different datasets have different features that are correlated with the target function, there are usually more useful features than necessary for equal performance \cite{ilyas2019adversarial}. So why is any given model learning frequency based features and more specifically high frequency features? Different theories have been produced for understanding robustness and generalization of neural networks through a frequency lens. For example, Universal Adversarial Perturbations (UAPs) is a method to compute which directions in input space are neural networks sensitive to~\cite{tsuzuku2019structural}. The authors of this work found that convolutional neural networks were sensitive to noise in the Fourier Basis, where other models such as MLPs are not so. Moreover the work about Neural Anisotropic Directions (NADs) found that neural networks use linearly separable directions for classification and these directions can be approximated by directions in the frequency domain \cite{ortiz2020hold,ortiz2020neural}.\\

Here we take a different and more principled approach by relying on the concept of \textit{implicit bias} to answer this question. The idea behind implicit bias is that the loss landscape of an overparameterized network has infinitely many global minima, and which global minimum one converges to after training depends on many factors, including the choice of model architecture and parametrization \cite{gunasekar2018implicit, yun2020unifying}, the initialization scheme \cite{sahs2020a} and the optimization algorithm \cite{williams2019gradient, woodworth2020kernel, sahs2020shallow}. The implicit bias of state-of-the-art models has been shown to play a critical role in the generalization of deep neural networks \cite{li2019enhanced,arora2019exact}. Recent theoretical work~\cite{gunasekar2018implicit} on $L$-layer \emph{deep linear networks} proved that (i) fully connected layers induce a depth-independent ridge ($\ell_2$) regularizer in the spatial domain of the networks weights whereas, surprisingly, \emph{full-width convolutional layers} induce a depth-dependent \textit{sparsity} ($\ell_{2/L}$) regularizer in the \textit{frequency} domain. Linear full-width convolutional models are different from the high-performance convolutional neural networks (CNNs) used in practice. Nonetheless, we hypothesised that similar mechanisms might play a role for \emph{deep nonlinear models with local convolutions}: the high frequency nature of commonly-found adversarial perturbations is due to the implicit bias induced by the specific architectural choice and not only by the dataset statistics. More formally, we propose the Implicit Fourier Regularization (IFR) hypothesis (see Figure~\ref{fig:IFRhypo}):

\begin{center}
\textit{Implicit regularization in frequency domain, directly caused by the translation invariance and spatially limited nature of bounded-width convolutions, yields a strong bias towards high frequency features and adversarial perturbations.}
\end{center}

\begin{figure*}[t]
\begin{subfigure}{\textwidth}
  \centering
  \includegraphics[width=1\textwidth]{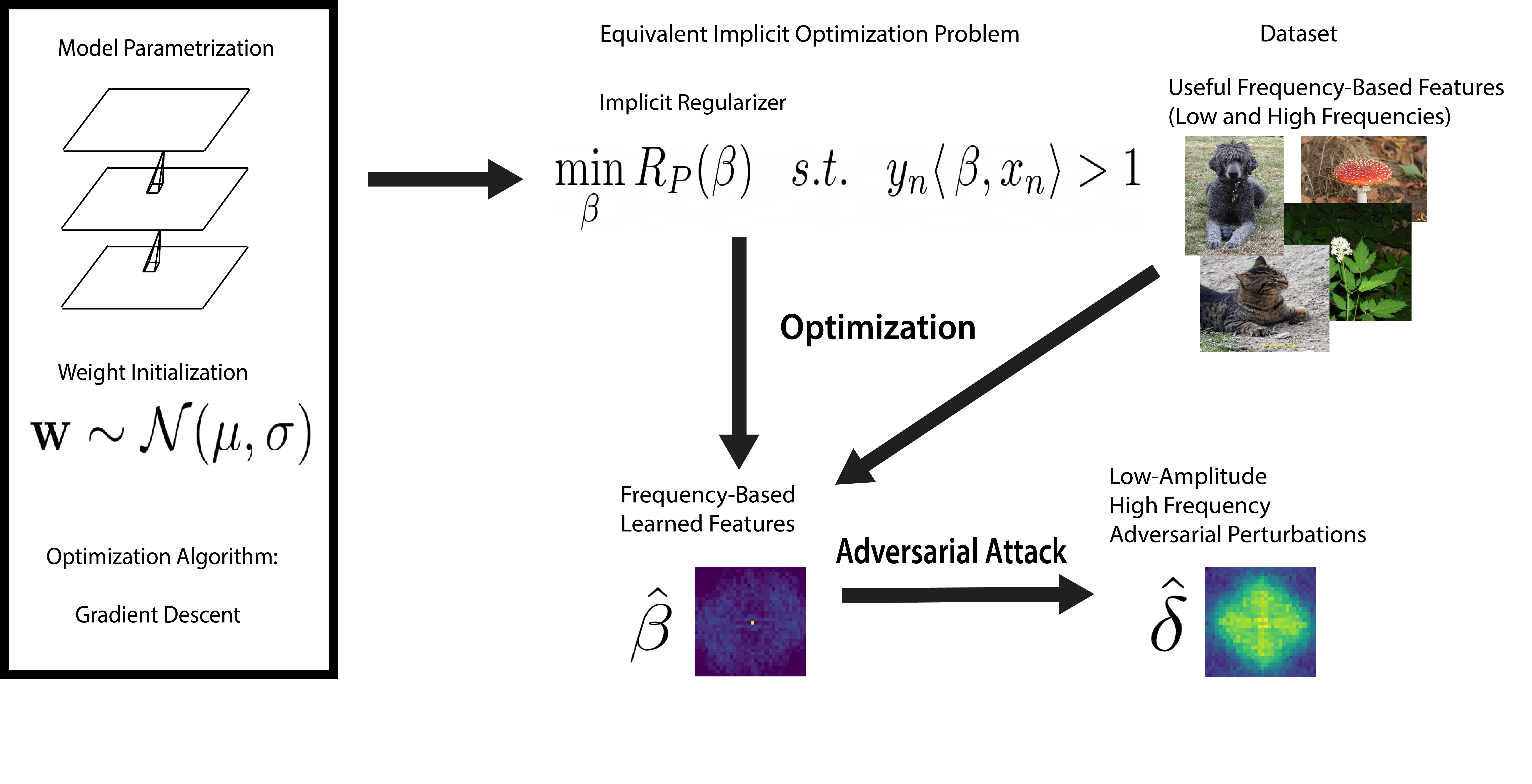}
\end{subfigure}
\caption{The Implicit Fourier Regularization (IFR) Hypothesis. Model parametrization and weight initialization induces an implicit optimization bias. This leads to different learned features $\beta$ and therefore, different adversarial perturbations $\delta$. In particular, implicit regularization in the frequency domain, directly caused by the translation invariance of the convolution operation, is necessary for the evolution of frequency-based features. Furthermore, the spatially-limited nature of common local convolutional operations encourages the learned features $\beta$ to possess energy in \textit{higher} frequencies.}\label{fig:IFRhypo}
\end{figure*}

The IFR hypothesis suggests that if a datasets presents useful high frequency features, models with bounded-width convolutional parametrizations will be biased to learn these features. This is because local convolutions induce a specific implicit bias, encouraging the linear features to possess more energy in \textit{high} frequencies which, in turn, induce high frequency adversarial perturbations. More broadly, our work establishes the relationship between the implicit regularization due to model parametrization and the structure of adversarial perturbations.

\section{Experimental Results}\label{sec:expt-results}

In the following we are going to test different aspects of the IFR principle. In particular in section \ref{sec:fullwidth} we analyze and compare the Fourier spectrum of the weights and adversarial perturbations of \textit{full width architectures} (convolutional or not, deep or not, linear or not, where the support of the weights is the full image).
In section \ref{sec:boundedwidth} we repeat the analysis for bounded width convolutional models and compare the spectral analysis with that of the full width models. Then, in section \ref{sec:translinv} and \ref{sec:nonconv} we analyze how convolutional weights sharing impacts the Fourier spectrum of the weights and adversarial perturbations. In section \ref{sec:shortcutfeats} we further test the model spectral bias injecting explicitly frequency-targeted shortcuts in the dataset and analyze the different models performances. Finally in section \ref{sec:imagenet} we test if the results obtained in the previous sections extends to a zoo complex models trained on Imagenet.

\subsection{Confirming Gunasekar et al, 2018~\cite{gunasekar2018implicit} results and extension to full width non linear models. Relationship to adversarial perturbations.}\label{sec:fullwidth}

To establish the relationship between implicit regularization and adversarial perturbations, we based our experiments on the recent theoretical work of Gunasekar et al., 2018 where the authors have shown that shallow linear convolutional neural networks ($L$ \textit{full-width circular} convolutional linear layers followed by one \textit{fully connected} linear layer) induce an implicit sparsity-promoting regularizer in the Fourier domain. Specifically, the regularizer is: $\c{R}_{FWC}(\beta_{L}) = \| \hat{\beta}_{L} \|_{\frac{2}{L}}$ where $\hat \beta_{L} := \mathcal{F} \beta_{L}$ is the Discrete Fourier Transform (DFT) of the end-to-end linear transformation $\beta_{L} := \prod_l^L W_l$ represented by the Full Width Convolutional network (FWC) with $W_{l}$ weights and $L$ the number of layers \cite{gunasekar2018implicit}. 
The authors proved that different linear layer parametrizations induce different implicit regularizers in the objective function. Thus changing the parametrization of the linear layer of a model induces different learned features $\beta$. Here, we argue that \textit{a similar effect should also be present in the structure of the adversarial perturbations $\delta$}. To test our hypothesis, we selected the two model parametrizations considered in Gunasekar et al, 2018, namely fully connected and full-width convolutional models. In addition, we used shallow (one hidden layer) and deep (three hidden layers) versions of these models, along with linear and nonlinear activation functions. These models were trained on 5 different datasets CIFAR-10 dataset~\cite{krizhevsky2009learning} (MIT), CIFAR100~\cite{krizhevsky2009learning}, MNIST~\cite{MNIST}, FashionMNIST~\cite{xiao2017fashionmnist}, and SVHN~\cite{SVHN} using PyTorch~\cite{paszke2019pytorch} (performances in Supp. Table ~\ref{tbl:accuracy}). 

Our goal is to analyze the relationship between the features $\beta$ learned by these models and their corresponding adversarial perturbations $\delta$ (See Section~\ref{sec:methods} for methodological details).

\begin{figure*}[t]
  \centering  \includegraphics[width=0.8\textwidth]{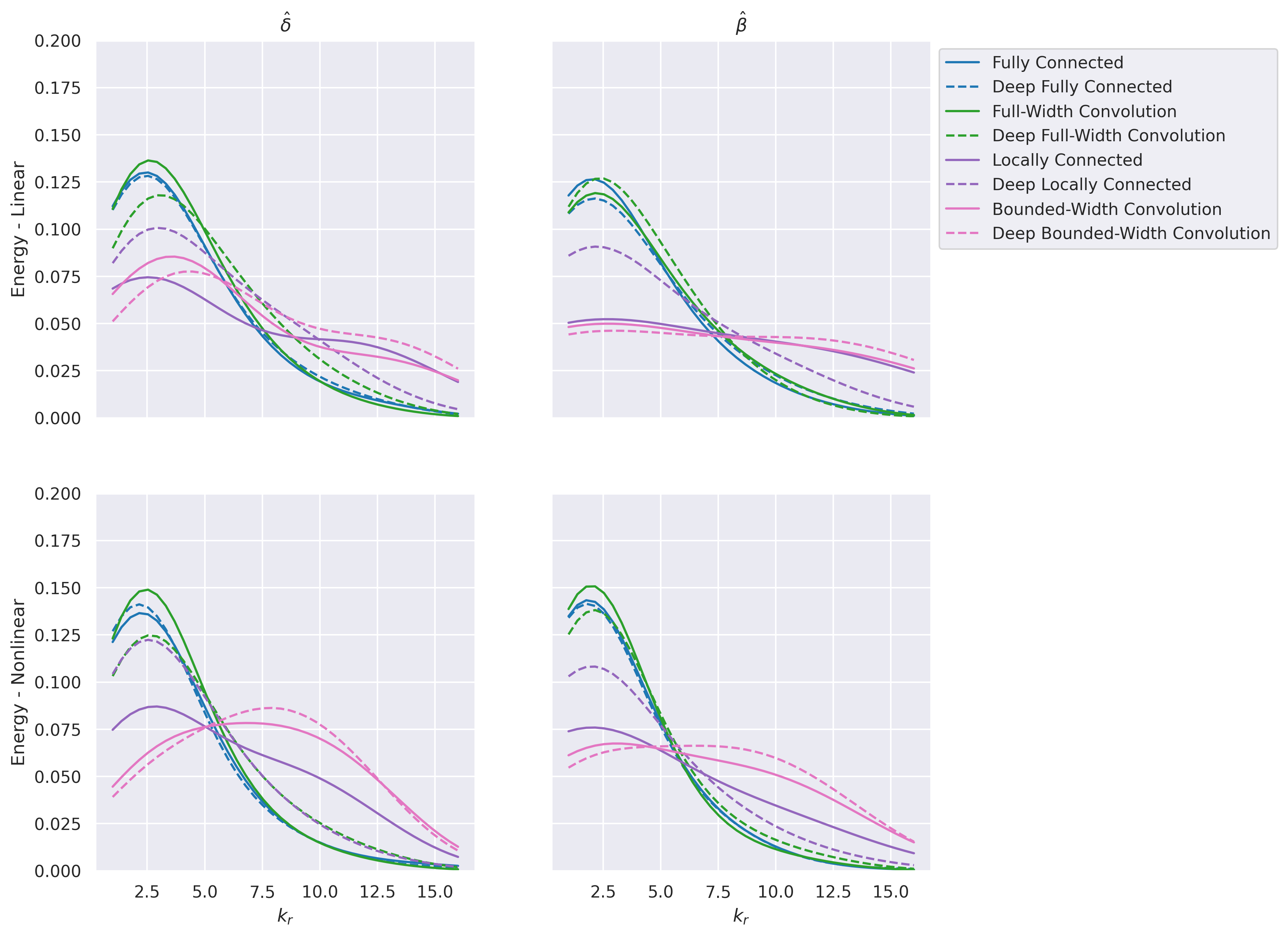}\caption{Radial Integral of the average frequency spectrum of adversarial perturbation ($\hat{\delta}$) and input-output weights ($\hat{\beta}$) for Fully Connected (blue), Full-Width Convolutional (green), Locally Connected (purple) and Bounded-Width Convolutional (pink) models for PGD-Linf Attack for CIFAR10. $k_r$ is the radial frequency. We can observe the correlation between the input-output weights and the adversarial perturbations for each model. This results also hold for different attacks (See Sup. Figures~\ref{fig:attacks_cifar10},~\ref{fig:attacks_cifar10_relu})}\label{fig:radial_cifar10} 
\end{figure*}

In order to do so we plotted, in Figure~\ref{fig:radial_cifar10}, for $\beta$ and $\delta$, the distribution of energy associated to the different frequencies in the spectrum of the perturbation averaging over all 2D directions (radial integral).
The Figure shows the radial integral of the adversarial perturbations ($\hat{\delta}$) for PGD-Linf attack \cite{madry2017towards} and that of the input-output weight ($\hat{\beta}$) in the case of CIFAR10  (for the other datasets see Supp Section \ref{sec:different_datasets}).  We observe that:

\textit{The full-width convolutional linear model has a sparser spectrum of $\beta$ compared to the fully connected model, and that when the model has more layers this difference increases (Deep FWC vs Deep FC). This is an experimental confirmation of the theoretical result in \cite{gunasekar2018implicit} for linear models}.

The results also show that $\beta$ and $\delta$ share the same spectral structure across all models we analyzed, indicating that the implicit regularizer induced by the model parametrization affects the structure of the adversarial examples. To further verify our results we quantified the different sparsity levels of the two models, measuring the $\ell_{1}$ and $\ell_{2}$ norm of the Fourier spectrum of $\beta$ for both the shallow and deep full-width and fully connected architectures as reported in Fig~\ref{fig:bar_norm}.

\begin{figure}[h]
    \centering
    \includegraphics[width=0.48\textwidth]{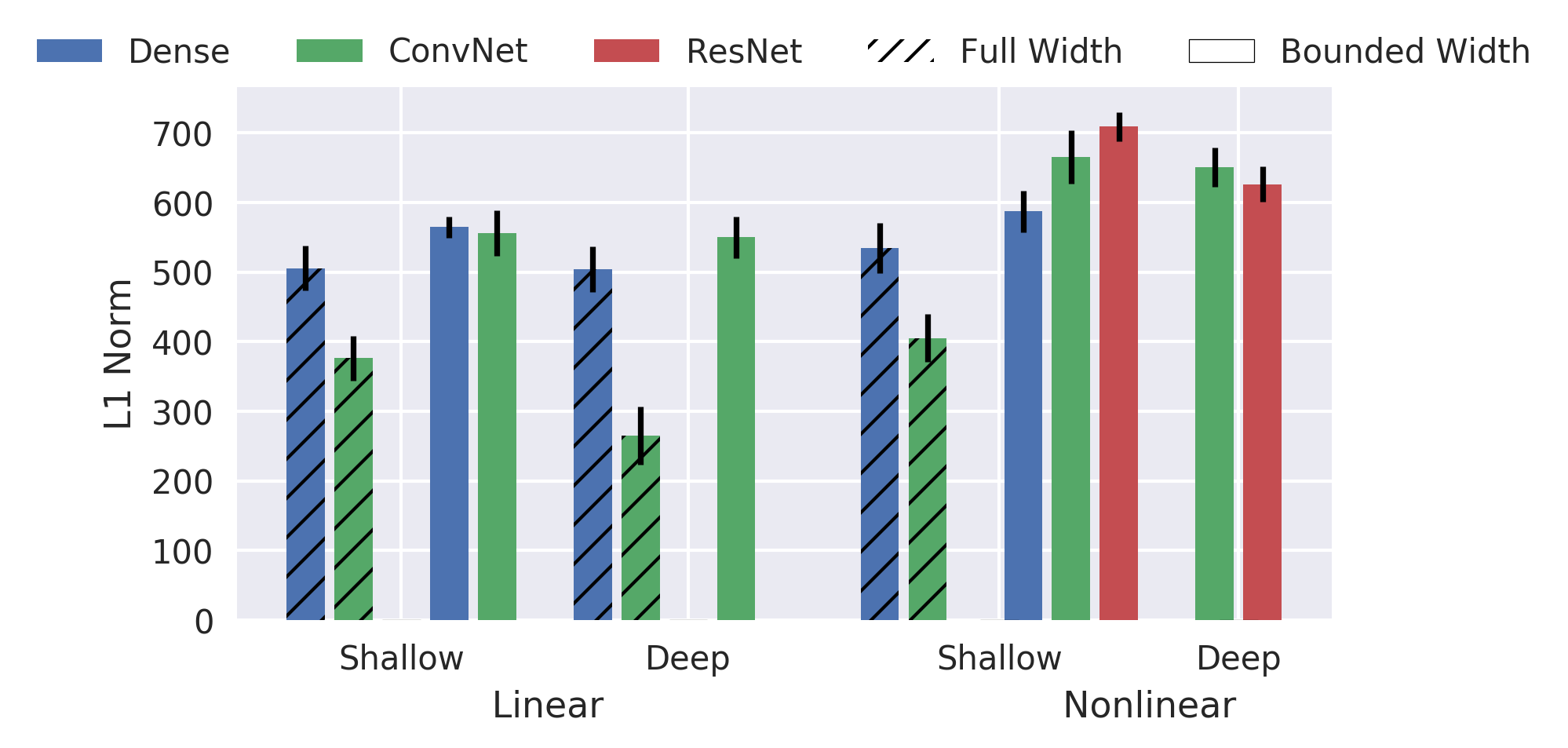}
    ~
    \includegraphics[width=0.48\textwidth]{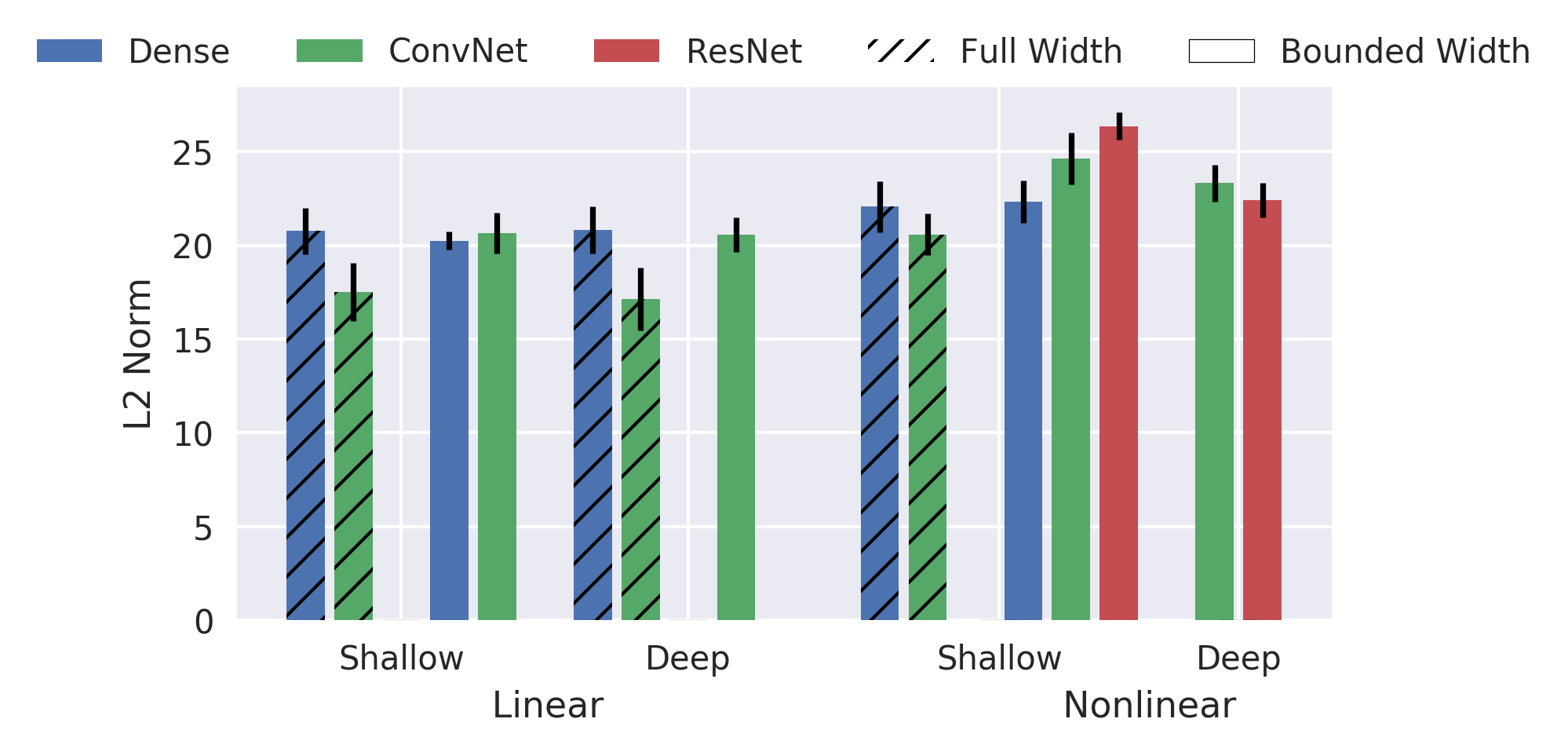}
    \caption{Comparison of Adversarial Perturbation Norms $\| \hat\delta \|_p$ for $p = 1,2$ for different linear and nonlinear models. The relations between norms of nonlinear models mirror and accentuate those of linear models. See Results for detailed comparisons.}\label{fig:bar_norm}
\end{figure}

Next, we tested if a similar implicit regularization holds for \textit{nonlinear models}. As shown in Figure~\ref{fig:bar_norm} the full-width convolutional model has a sparser spectrum for both $\beta$ and $\delta$ compared to the fully connected model, confirming our hypothesis.


\subsection{Bounded-Width Convolutional Models Analysis}\label{sec:boundedwidth}

One limitation of the analysis done in \cite{gunasekar2018implicit} and in the previous section is that full-width convolutional layers are not often used in common state-of-the-art architectures. Bounded-width convolutional models are instead usually employed, which have smaller kernel sizes than the width of the input. A natural question is therefore: what is the effect of the bounded-width convolutional model on $\hat{\beta}$ and $\hat{\delta}$?

To answer the question, we trained bounded-width convolutional models and generated both $\hat{\beta}$ and $\hat{\delta}$ with the same procedure as the previous section. In Figure~\ref{fig:radial_cifar10}, we can see that:

\textit{The bounded-width convolutional models (both linear and nonlinear) contain high frequency components in $\hat{\delta}$ and their presence grows with depth in the Deep Bounded-Width model (DBWC). Moreover, we observe that the $\beta$ corresponding to the bounded-width convolutional model is as sparse or less sparse (more dense) compared to the full-width counterpart as reported in Figure~\ref{fig:bar_norm}.}

Given that the theory in \cite{gunasekar2018implicit} only specifies a bias towards $\ell_\frac{2}{L}$-sparsity in the frequency domain (i.e. both in the low and high frequencies),  \textit{why do we observe a bias in the higher frequencies for the bounded-width convolutional model but not the full-width convolutional model?} Our experiments suggest that high frequency concentration is due to the fact that convolutional kernels are localized. We propose a theoretical explanation via the Fourier Uncertainty Principle -- i.e. a space-limited kernel \textit{cannot} be band-limited in frequency domain -- as the origin of the frequency bias. This reasoning can be made rigorous for a bounded-width linear convolutional model by a simple and straightforward extension of the results of Gunasekar et al, 2018 along with the invocation of energy concentration Uncertainty Principle, resulting in the following theorem:

\begin{restatable}{theorem}{thmucp}
Let $\beta := \star_{l=1}^{L-1} w_l$ be the end to end linear predictor, where $w_l$ is the weights for layer $l$ and $\star$ indicates convolution. Then decreasing the kernel size of each convolutional filter $w_{l} \in \mathbb{R}^{D}$ results in an increased concentration of energy in high frequencies for $\hat{\beta}$.  
\end{restatable}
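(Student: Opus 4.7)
The plan is to reduce everything to the convolution theorem and then invoke an energy-concentration form of the Fourier Uncertainty Principle on each factor. By the convolution theorem applied to the circular convolution $\beta = \star_{l=1}^{L-1} w_l$, the Discrete Fourier Transform factorises as a pointwise product:
\begin{equation*}
\hat{\beta}(\omega) \;=\; \prod_{l=1}^{L-1} \hat{w}_l(\omega),
\end{equation*}
so controlling the frequency profile of $\hat{\beta}$ reduces to controlling the frequency profile of each $\hat{w}_l$. This is the same starting point used by Gunasekar et al.\ for the sparsity statement, but here I keep track of \emph{where} in the spectrum the mass of each $\hat{w}_l$ must live rather than only counting its support.

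Next I would make ``spatial concentration'' quantitative by assuming each $w_l \in \bR^D$ is supported on a set $T_l \subset \{0,\dots,D-1\}$ of size $k_l \ll D$ (the bounded-width/locality assumption; a compactly supported kernel zero-padded to length $D$ is the canonical case). The Donoho--Stark uncertainty principle then gives $|T_l| \cdot |\Omega_l| \ge D$ for any essential frequency support $\Omega_l$, so $|\Omega_l| \ge D/k_l$, i.e.\ $\hat{w}_l$ cannot be band-limited. To upgrade this from ``spread'' to ``biased towards high frequencies,'' I would invoke the stronger energy-concentration form: for any low-frequency band $B \subset \{0,\dots,D-1\}$ of radius $r$, the fraction of $\hat{w}_l$'s energy contained in $B$ is bounded above by a function that \emph{decreases} as $k_l$ decreases (intuitively, in the extreme case $k_l = 1$, $\hat{w}_l$ is flat, which already places a fraction $1 - |B|/D$ of its energy outside $B$). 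Shrinking the spatial support of each $w_l$ therefore monotonically transfers energy of $\hat{w}_l$ out of any fixed low-frequency band.

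Finally I would aggregate across layers. Because $|\hat{\beta}(\omega)|^2 = \prod_l |\hat{w}_l(\omega)|^2$, the fractional energy of $\hat{\beta}$ inside a low-frequency band $B$ can be bounded by a product of the per-layer low-frequency mass fractions, and each of those factors shrinks under spatial concentration by the previous step. Dually, outside $B$, the high-frequency factors of the individual $\hat{w}_l$'s compound multiplicatively, so the relative mass of $\hat{\beta}$ on high frequencies grows. Combining with the depth-$L$ sparsity regulariser already established for full-width convolutions gives the stated conclusion: spatial concentration of the $w_l$'s forces the sparse support of $\hat{\beta}$ to lie away from the low-frequency band.

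The main obstacle I expect is in the step ``spread $\Rightarrow$ high-frequency bias.'' The uncertainty principle by itself only says $\hat{w}_l$ cannot concentrate in \emph{any} small set, including a low-frequency band, but the natural statement is symmetric in frequency location, so one must either assume a generic/worst-case orientation, appeal to a quantitative low-pass energy bound (e.g.\ a discrete analogue of the prolate spheroidal concentration bound), or encode ``low frequency'' via a specific smoothness functional and show that shrinking $|T_l|$ strictly increases it. Getting a clean, depth-dependent inequality of the form $\sum_{\omega \in B} |\hat{\beta}(\omega)|^2 \le C(k_1,\dots,k_{L-1},|B|) \cdot \norm{\beta}_2^2$ with $C$ monotone in each $k_l$ is where the real work lies; the rest is bookkeeping around the convolution theorem.
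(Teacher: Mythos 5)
Your proposal follows essentially the same route as the paper's proof: reduce to the convolution theorem so that $\hat{\beta}$ is the pointwise (Hadamard) product of the $\hat{w}_l$, apply an uncertainty principle to each filter separately, and then aggregate across layers by bounding the low-frequency energy fraction of $\hat{\beta}$ by the product of the per-layer low-frequency fractions via $\|a\odot b\|_2 \le \|a\|_2\|b\|_2$. The one ingredient you leave open --- a quantitative ``energy-concentration'' bound rather than the support-counting Donoho--Stark inequality --- is exactly what the paper supplies: it invokes the Ghobber--Jaming annihilating-pair inequality $\|w\|_{\ell^2(S^c)} + \|\hat{w}\|_{\ell^2(\Omega^c)} \ge C(S,\Omega)\,\|w\|_2$, which after normalizing by $\|w\|_2$ says that shrinking the spatial leakage $\kappa(S^c)$ forces the spectral leakage $\hat{\kappa}(\Omega^c)$ up. So your skeleton is the paper's proof with that lemma left unnamed.

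The obstacle you flag at the end, however, is genuine --- and it is worth noting that the paper's proof does not actually resolve it either. The Ghobber--Jaming bound is symmetric in the choice of $\Omega$: taking $\Omega$ to be a high-frequency band of the same size yields, by the identical argument, that spatial concentration pushes energy toward the \emph{low} frequencies. What the inequality really establishes is that a space-limited filter cannot concentrate its spectral energy in \emph{any} fixed small band, so its spectrum must delocalize; identifying ``delocalized'' with ``high-frequency biased'' needs an additional step, e.g.\ that the low-frequency band is small relative to its complement so that spread-out energy is mostly high-frequency by measure, or a quantitative low-pass concentration bound (prolate-spheroidal type) that is monotone in the kernel width. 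You correctly located where the real work lies; it is a gap you share with, rather than one you created relative to, the paper's argument.
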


A rigorous mathematical treatment can be found in the \ref{theory}.
The key intuition is that, due to the Fourier Uncertainty Principle, decreasing the support of the convolutional filter $w_{l}$ at layer $l$ causes an increase of its high frequency energy content. 

We directly tested our theory on the same models proposed by Gunasekar 2018 but with varying kernel sizes ($3\times3$, $7\times7$, $11 \times 11$, $15\times 15$, $21\times 21$, and $32\times 32$), with 1 or 3 hidden layers, and train them on Grayscale CIFAR10. This choice was made to keep the number of hidden channels equal to the number of input channels. Then we computed the frequency spectrum of $w_{l}$ and the cumulative weights up to layer $l$, $\beta_{l}$. Next we computed $\kappa_{high}$, defined as the fraction of energy outside of an interval $[-\frac{k}{2},+\frac{k}{2}]$, where $k$ is an specific frequency, divided by the total energy for each kernel. In Figure~\ref{fig:combine_kernel}, we observe that the fraction of energy is a function of the kernel size, where for smaller $k$, $\kappa_{high}$ is higher. Furthermore, as the model goes deeper, the difference is further exacerbated. We observe the same phenomenon both for deeper models and during training (Left). The complete plot with all intermediate layers can be found in the Supplementary Material.

\begin{figure}[h]
    \centering
    \includegraphics[width=\textwidth]{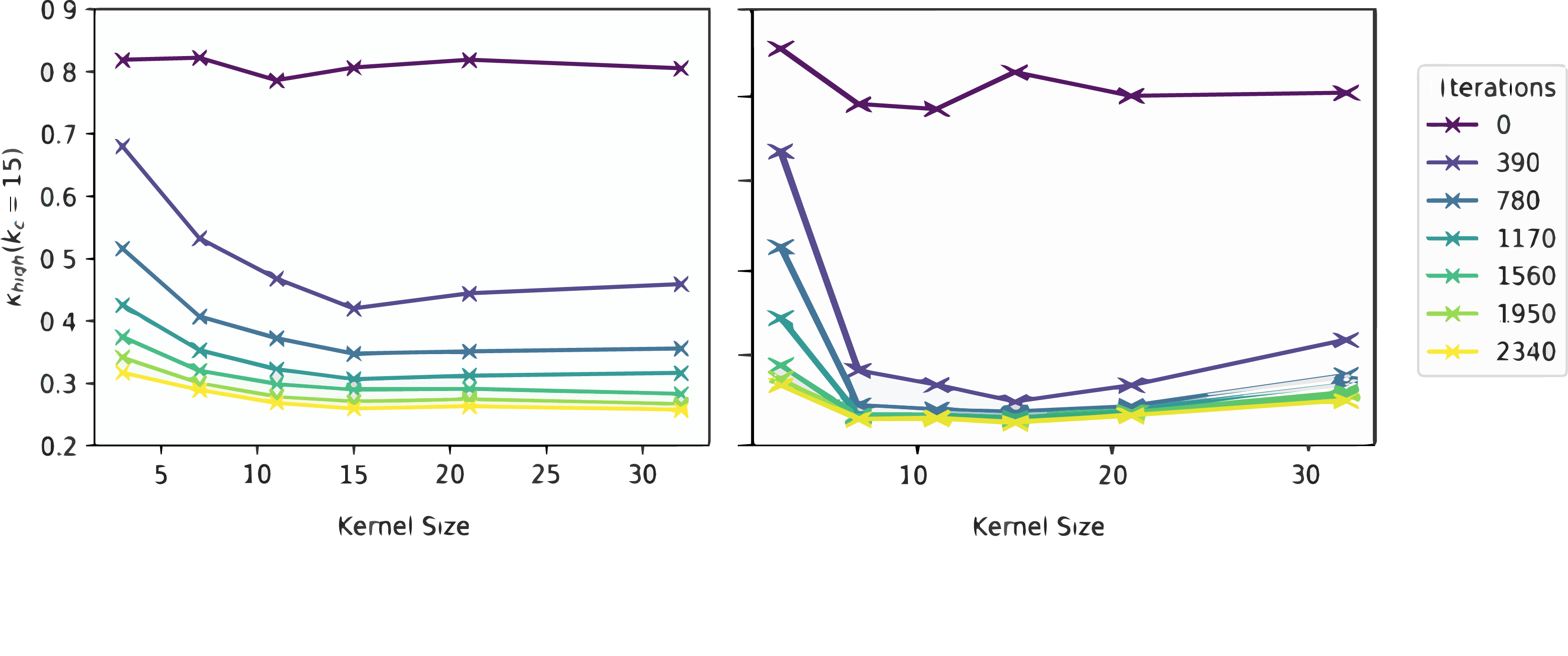}
    \caption{Concentration of energy $\kappa_{high}$ for $\kappa_{r}$ = 15 (the energy cut-off to consider high energy frequencies) for the input-output weights $\beta$ versus kernel size (3,7,9,11,15, 21, or 32). Results are obtained averaging 5 models. All models were trained for 40 epochs on Grayscale CIFAR10. Left: One hidden layer models and Right) Three hidden layer models.}
    \label{fig:combine_kernel}
\end{figure}

\subsection{Role of Translation Invariance in the frequency spectrum of $\beta$ and $\delta$}\label{sec:translinv}

Next, to further understand the origin of the high frequency bias, we investigated if the high frequency bias also present in a model with local kernels but not translation invariance (i.e. no shared weights). In other words we tested if convolutionality was essential in determining the high frequency bias. We trained a locally connected model (no convolutional weight sharing) with same kernel size on CIFAR10. In Figure~\ref{fig:radial_cifar10} we observe that the locally connected models do not have as much energy in the higher frequencies, particularly for the deep models. The results show that local connectivity alone is \textit{not} sufficient to bias the model towards learning high frequency features; translation invariance is also required. \textit{Thus, local convolutions are necessary to get a bias towards high frequency in the learned features and adversarial perturbations}.

\begin{figure*}[t]
\begin{subfigure}{\textwidth}
  \centering
  \includegraphics[width=\textwidth]{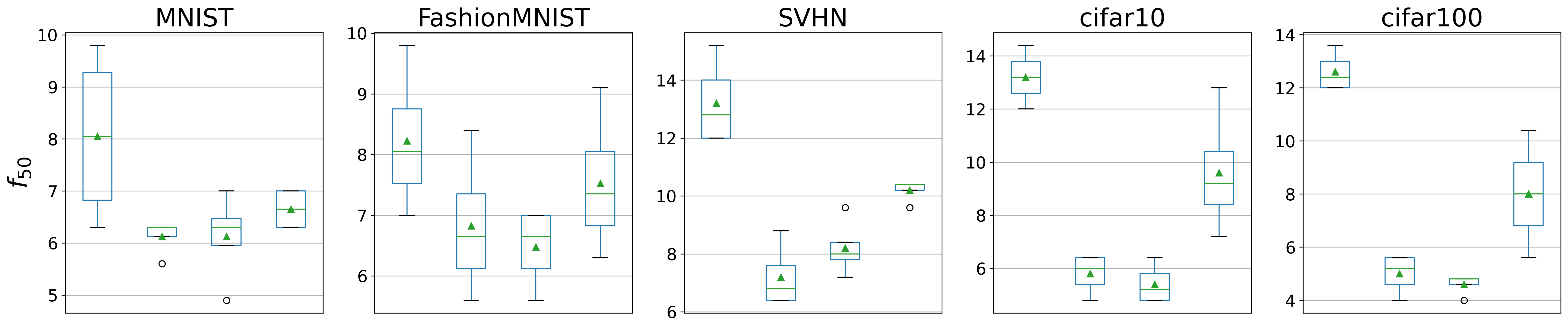}
\end{subfigure}
\hfill
\begin{subfigure}{\textwidth}
  \centering
  \includegraphics[width=1.015\textwidth]{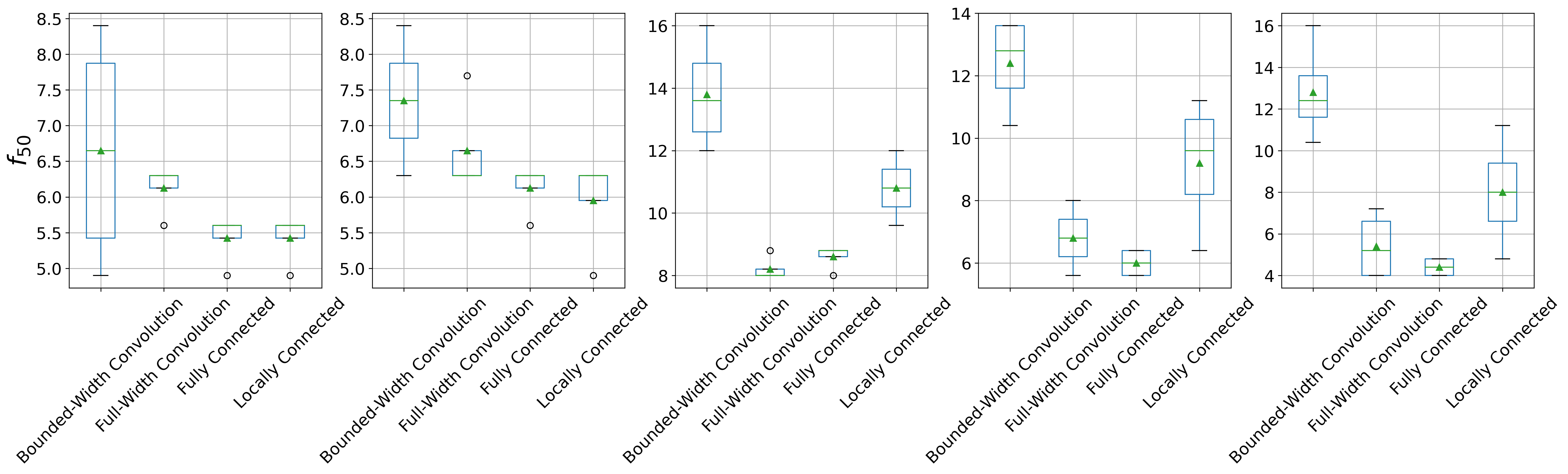}
\end{subfigure}
  \caption{Half Power Frequency ($f_{50}$) of (Top) adversarial perturbation ($\hat{\delta}$) and (Bottom) input-output weights ($\hat{\beta}$) for different datasets and models such as Fully Connected, Full-Width Convolutional, Locally Connected and Bounded-Width Convolutional models for PGD-Linf Attack  (See Sup. Figures~\ref{fig:attacks_cifar100_1D},~\ref{fig:attacks_FashionMNIST_1D},~\ref{fig:attacks_MNIST_1D},~\ref{fig:attacks_SVHN_1D} for 1D Radial Integral energy of each dataset)}\label{fig:f50_all_datasets} 
\end{figure*}

Clearly local convolutions are not the only factor.
Indeed, as Maiya et al \cite{maiya2021frequency} has shown, data-statistics is also an important element in determining the frequency content of adversarial perturbations. Thus we analyzed the compounding effect of model parametrization in the context of different datasets. Towards this goal we computed the cumulative 1D radial integral of $\hat{\beta}$ and $\hat{\delta}$ for MNIST, FashionMNIST, SVHN, CIFAR10 and CIFAR100. Then, we estimated the half power frequency ($f_{50}$) of each model (averaged across different depths and nonlinearities), i.e. the frequency at which we acumulate the $50\%$ total energy. In Figure~\ref{fig:f50_all_datasets}, we observe that the $f_{50}$ is larger for the Bounded-Width Convolutional Model compared to the Full-Width Convolution, Fully-Connected and Locally Connected models \textit{for the  all considered datasets} (See Sup. Figures~\ref{fig:attacks_cifar100_1D},~\ref{fig:attacks_FashionMNIST_1D},~\ref{fig:attacks_MNIST_1D},~\ref{fig:attacks_SVHN_1D} for Radial Integral energy of each dataset).  These results show that if high frequency useful features are present, the bounded-width convolutions will have more high frequency energy both in $\delta$ and $\beta$ compared to other model parametrizations. Furthermore, even in datasets with more low-frequency information content such as MNIST or FashionMNIST, there is still a smaller but significant bias towards higher frequencies in Bounded-Width Convolutional models.

\begin{figure*}[t]
\begin{subfigure}{\textwidth}
  \centering
  \includegraphics[width=\textwidth]{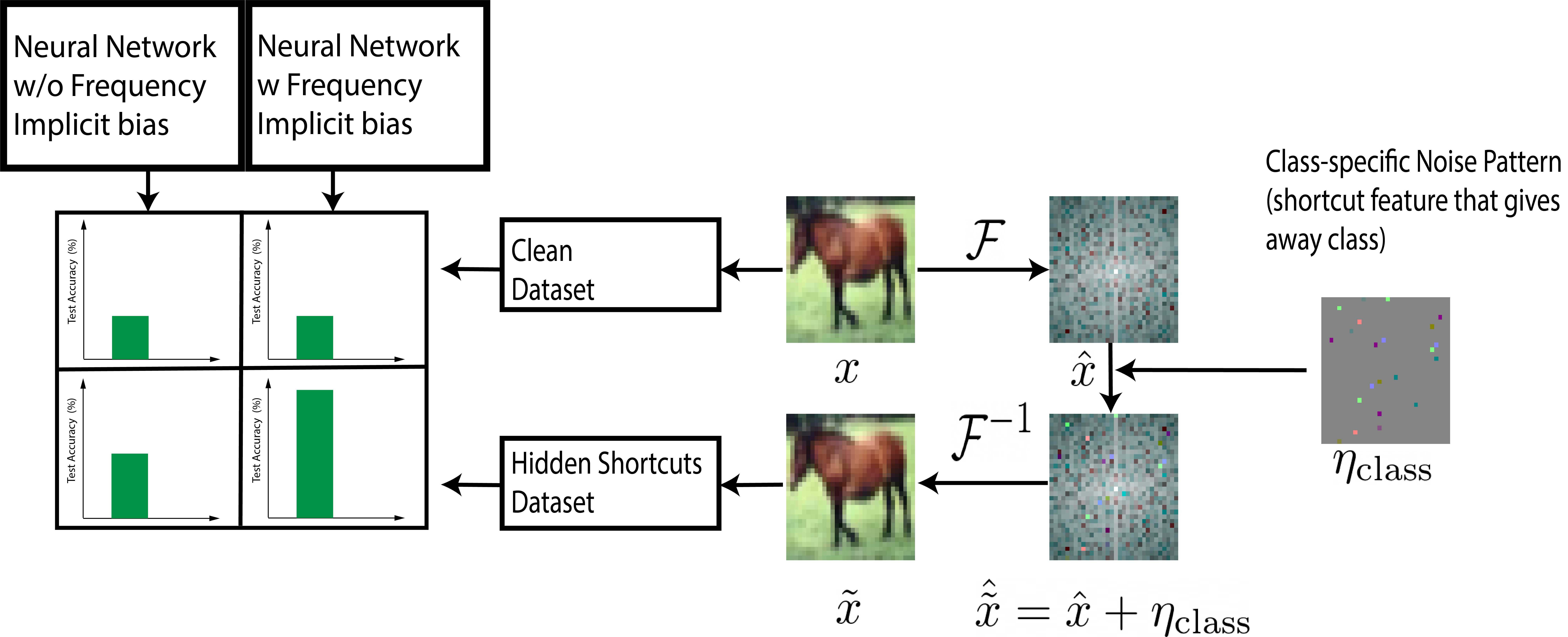}
\end{subfigure}
\caption{Steganography experiment. Class specific information $\eta_{class}$ introduced into Fourier Spectrum of training set image ($\hat{\tilde{x}}  = \hat{x} + \eta_{\textrm{class}} \: : \: \eta_{\textrm{class}} = M_{\textrm{class}} \odot (\epsilon * N_{\textrm{class}}), N_{\textrm{class}} \sim \mathcal{N}(0, 1), M_{\textrm{class}} \sim \textrm{Bernoulli}(\rho)$). Models with sparse frequency regularizer (Full-Width and Bounded-Width Convolutional models) should take more advantage of hidden shortcut features in the dataset, therefore leading to higher test accuracy.}\label{fig:steganography}
\end{figure*}

\subsection{Further testing different implicit bias via the Injection of Hidden Shortcut Features}\label{sec:shortcutfeats}

Given that the theory in Gunasekar et al., 2018 doesn't extend to the implicit regularization for nonlinear convolutional models, we decided to directly test if the implicit bias of nonlinear model parametrizations is similar to that of their linear counterparts.

To test our hypothesis, we took inspiration from the field of steganography which focuses on hiding shortcut features in images that are visually undetectable by humans~\cite{cheddad2010digital}. 
We introduced a class-correlated sparse shortcut features in Fourier space of every CIFAR-10 train and test set image (See Figure~\ref{fig:steganography}). Our hypothesis is that these class-dependent shortcut features will be mostly useful for model parametrizations that have an implicit regularizer that is sparse in the frequency domain. Therefore the model with the right implicit bias should achieve \textit{higher} test accuracy (for generation of the shortcut features see the Methods section, \ref{sec:methods}
). Furthermore, these experiments helped us to further analyze the relationship between the implicit bias due to model parametrization and specific dataset statistics because these new features will now be part of the dataset. Indeed if the model parametrization is not important all models will take advantage of those features and the dataset statistic is the main driver of feature selection.

\begin{table*}[h]
    \centering
    \caption{Performance on CIFAR-10 on base vs frequency-based dataset , mean $\pm$ std over 5 trials each. $\rho$ is the sparsity level of the added signal and $\epsilon$ is the scalar factor for the matrix. }
    \begin{adjustbox}{max width=\textwidth}
    \begin{tabular}{|c |c| c| c|}
        \hline
        &  Baseline & $\epsilon = .25, \rho = .2$ & $\epsilon = .25, \rho = .1$\\
        \hline
        Fully Connected Linear  & 40.8 $\pm$ .070 & 53.26 $\pm$ .075 & 45.27 $\pm$ 1.14\\
        Fully Connected Nonlinear  & 48.5 $\pm$ .084 & 59.6 $\pm$ .223 & 54.93 $\pm$ .497\\
        Full Width Convolutional Linear  & 41.8 $\pm$ .124 & 70.3 $\pm$ .805 & 79.6 $\pm$ 1.35\\
        Full Width Convolutional Nonlinear  & 52.4 $\pm$ 2.89 & 79.5 $\pm$ .912 & 87.2 $\pm$ 1.26\\
        Bounded Width Convolutional Linear  & 41.8 $\pm$ .046 & 97.28 $\pm$ .604 & 92.33 $\pm$ .679\\
        Bounded Width Convolutional Nonlinear  & 56.7 $\pm$ .489 & 98.8 $\pm$ .850 & 97.8 $\pm$ .475\\
        \hline
    \end{tabular}
    \end{adjustbox}
    \label{tab:poison}
\end{table*}

\begin{table*}[h!]
    \centering
    \caption{Performance on CIFAR-10 on base vs frequency-based dataset. Low Frequency $(\epsilon = 0.5)$, Medium Frequency $(\epsilon = 5e-02)$ and High Frequency $(\epsilon = 5e-04)$. Different $\epsilon$'s were chosen to match the signal to noise ratio of those frequencies bands and make the task as difficult as possible. Standard Deviation was calculated over 5 runs.}
    \begin{adjustbox}{max width=\textwidth}
    \begin{tabular}{|c| c| c| c|}
        \hline
        Models & Low Frequency & Medium Frequency & High Frequency\\
        \hline
        Full-Width Convolution (L=1)      & 100.0      & 41.03 & 41.38 \\
        Full-Width Convolution (L=3)  &  100.0      & 39.80 & 40.97 \\
        Bounded-Width Convolution (L=1)  &  100.0      &49.99 & 53.60 \\
        Bounded-Width Convolution (L=3)  & 100.0       & 94.17 & 98.47 \\
         Fully Connected (L=1)  & 99.78       & 41.45  & 41.38 \\
         Fully Connected (L=3)  & 100.0        & 46.37 & 41.21 \\
         Locally Connected (L=1)  & 100.0        & 44.39 & 46.23 \\
         Locally Connected (L=3)  & 100.0       & 47.36 & 54.73 \\
        \hline
    \end{tabular}
    \end{adjustbox}
    \label{tab:poison.freq}
\end{table*}

Table~\ref{tab:poison} shows the test accuracy of each linear and nonlinear model with the new dataset and different levels of sparsity in the frequency domain. \textit{We can observe that both the full-width and bounded-width models have higher performance than baseline with the new dataset}. However, the fully connected model, having a different bias, cannot take full advantage of the class-dependent shortcut features introduced into the dataset. This experiment reveals that the implicit regularizer for sparsity in the frequency domain \textit{is present in both linear and nonlinear convolutional models}.

In addition to these results, we further tested the high frequency bias of the bounded-width parametrization. By performing a new variant of the hidden features experiment, we now localize the information in the low, medium, or high frequencies by introducing the class-dependent signals characterized by frequency in specific bands of the spectrum of the training and testing set of CIFAR10 (See Section~\ref{sec:methods} for methodological details). 

Table~\ref{tab:poison.freq} shows the performance of the linear models with the new frequency-based class-dependent features. We observe that all models are able to use the low-frequency shortcut features in order to perform the task. Furthermore, when the cheat signal is introduced in the medium and high frequencies the full-width convolutional, fully connected, and locally connected models have a much more difficult time in selecting the signal. On the contrary the \textit{Bounded-Width Convolutional Model is able to perform the best in medium and high frequencies compared to the other models and the difference is maintained in deeper models}. This experiment not only confirms the presence of an implicit bias due to parametrization in nonlinear ReLU models, but also shows that when useful frequency-based information is present, models with localized convolutions are able to capture those features with more ease compared to other model parametrizations. We compare our results to Wang et al 2020, where they showed that the smoothness of the convolutional models final kernel is the main driver of low frequency bias. Here we show that  bounding the kernel has a stronger regularization on the frequency component of the features learned. Furthermore, Tsuzuku et al, 2019 showed that convolution-based models have a preference towards features in the Fourier basis. Here we expand on their results, asking what is the difference between bounded vs full-width convolutions and showing that not all convolution-based models are able to pick up every feature in the Fourier domain. 

\subsection{Non-convolutional state-of-the-art models do not exhibit a high frequency bias}\label{sec:nonconv}

Since local convolutions cause a bias towards high frequency features and adversarial perturbations, it is natural to ask: are high-performance deep models without convolutions less biased towards high-frequency features?  One possible architecture is the Vision Transformer (ViT), which has performed on par with convolution-based architectures in many tasks including object recognition \cite{dosovitskiy2020image}. Furthermore, recent work has shown that ViTs can be more robust to high frequency adversarial perturbations than ResNets \cite{shao2021adversarial}. 

\begin{figure*}[t]
\begin{subfigure}{0.49\textwidth}
  \centering
  \includegraphics[trim={0 0 0 0},clip,width=\textwidth]{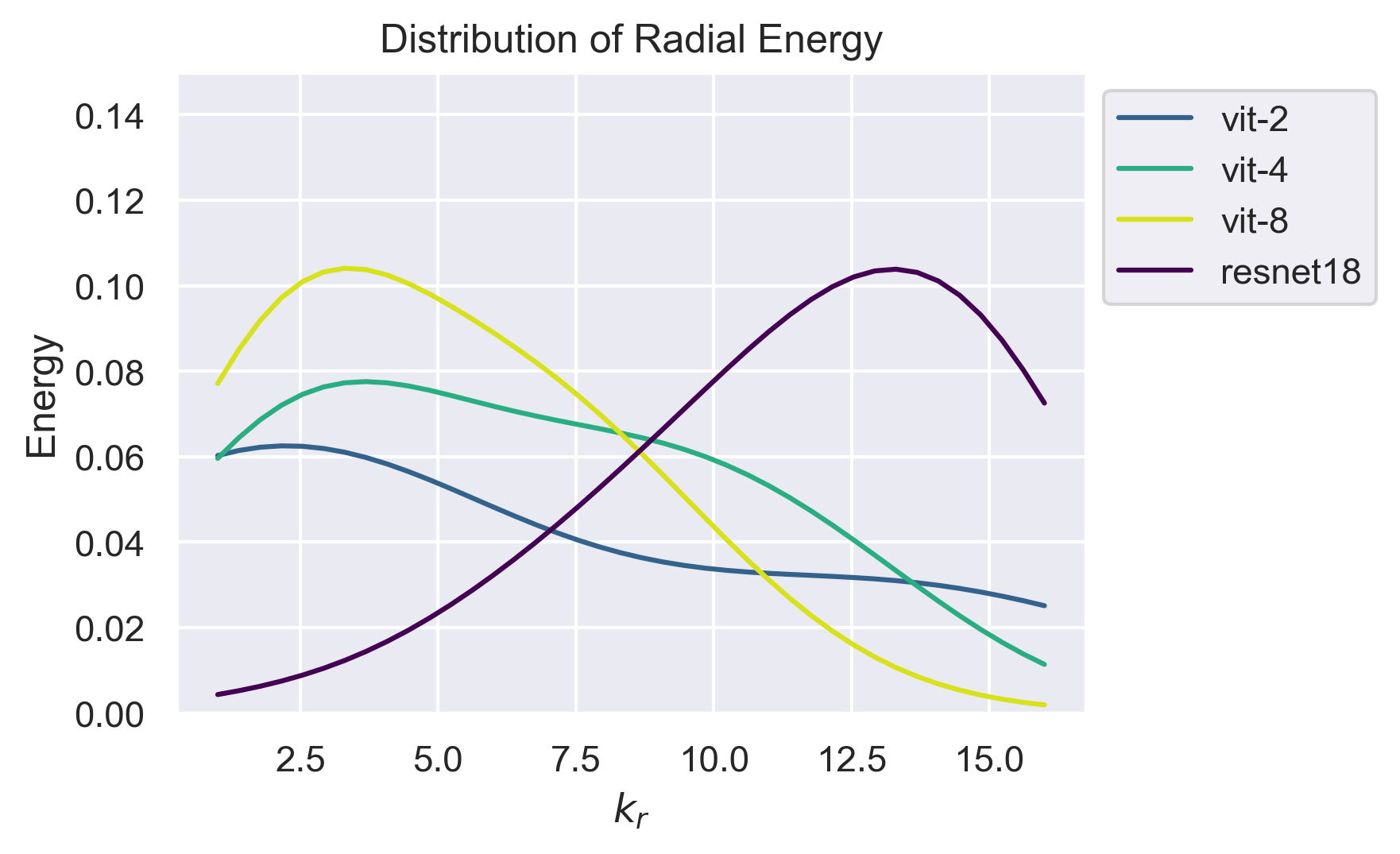}
  \caption{}\label{fig:vit}
\end{subfigure}
\hfill
\begin{subfigure}{0.49\textwidth}
  \centering
  \includegraphics[trim={0 0 0 0},clip,width=\textwidth]{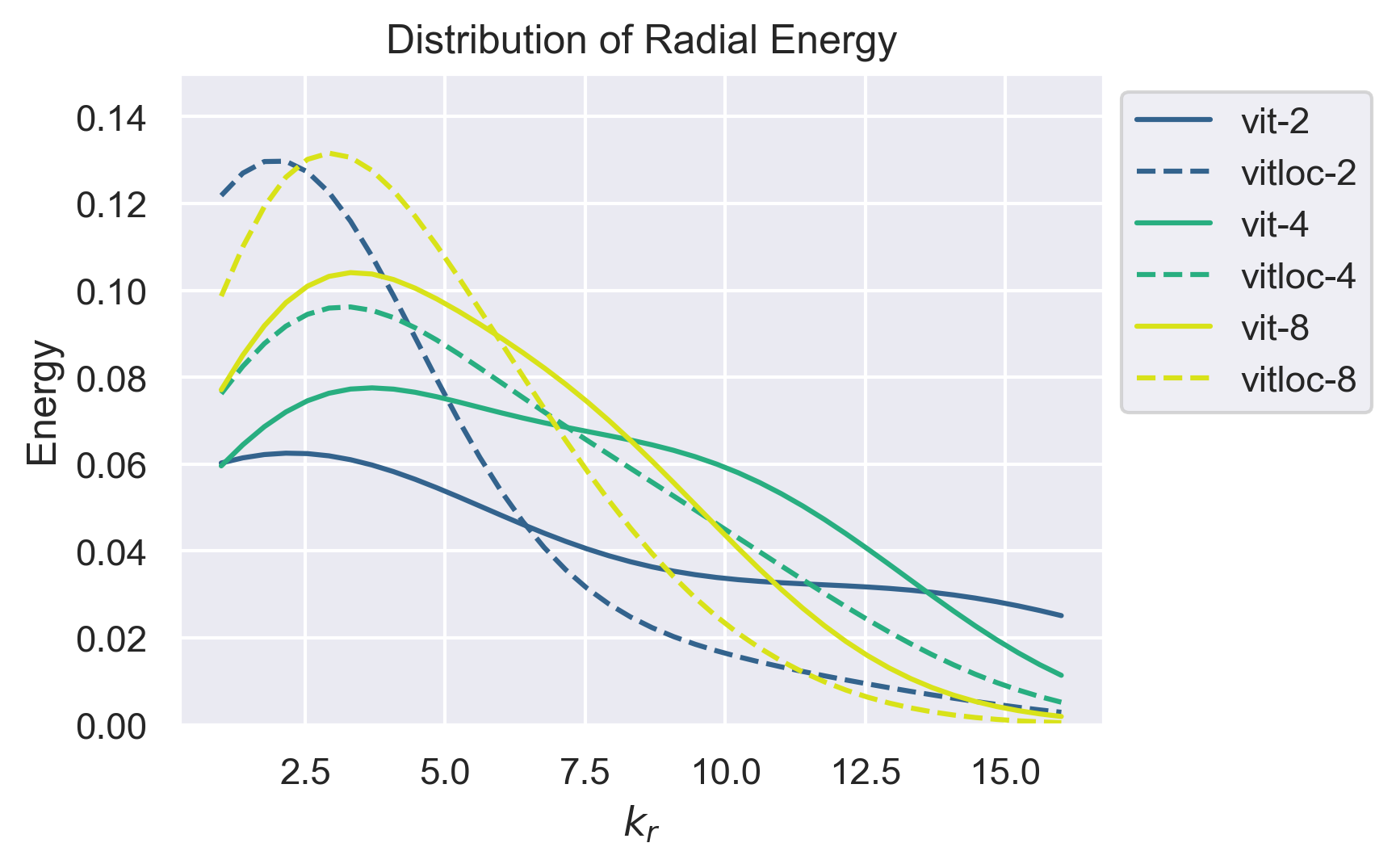}
  \caption{}\label{fig:vitloc}
\end{subfigure}
    \caption{Radial Energy of the Adversarial Perturbations of Vision Transformer and Local Vision Transformer models trained on CIFAR10. (a) All Vision Transformer models (vit-2, vit-4, vit-8) have more energy in lower frequencies than ResNet18. (b) Locally Connected model's (vitloc-2, vitloc-4, vitloc-8) spectrum have more energy in lower frequencies compared to their standard ViT counterpart.}\label{fig:transformer_cifar10} 
\end{figure*}

\begin{figure*}[h]
\begin{subfigure}{\textwidth}
  \centering
  \includegraphics[width=\textwidth]{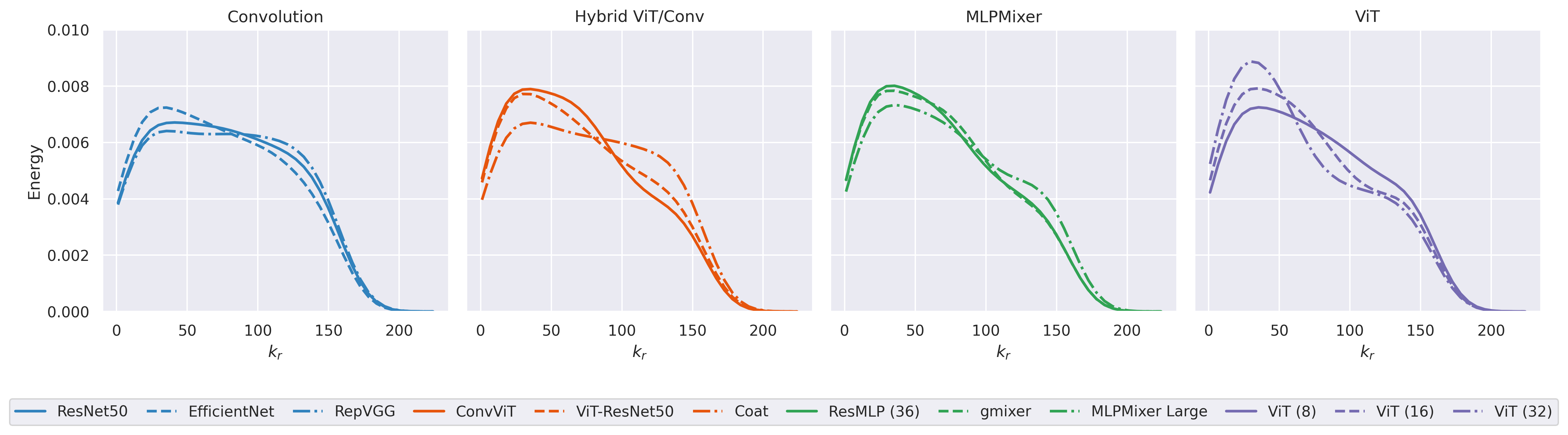}
\end{subfigure}
\begin{subfigure}{\textwidth}
  \centering
  \includegraphics[width=0.5\textwidth]{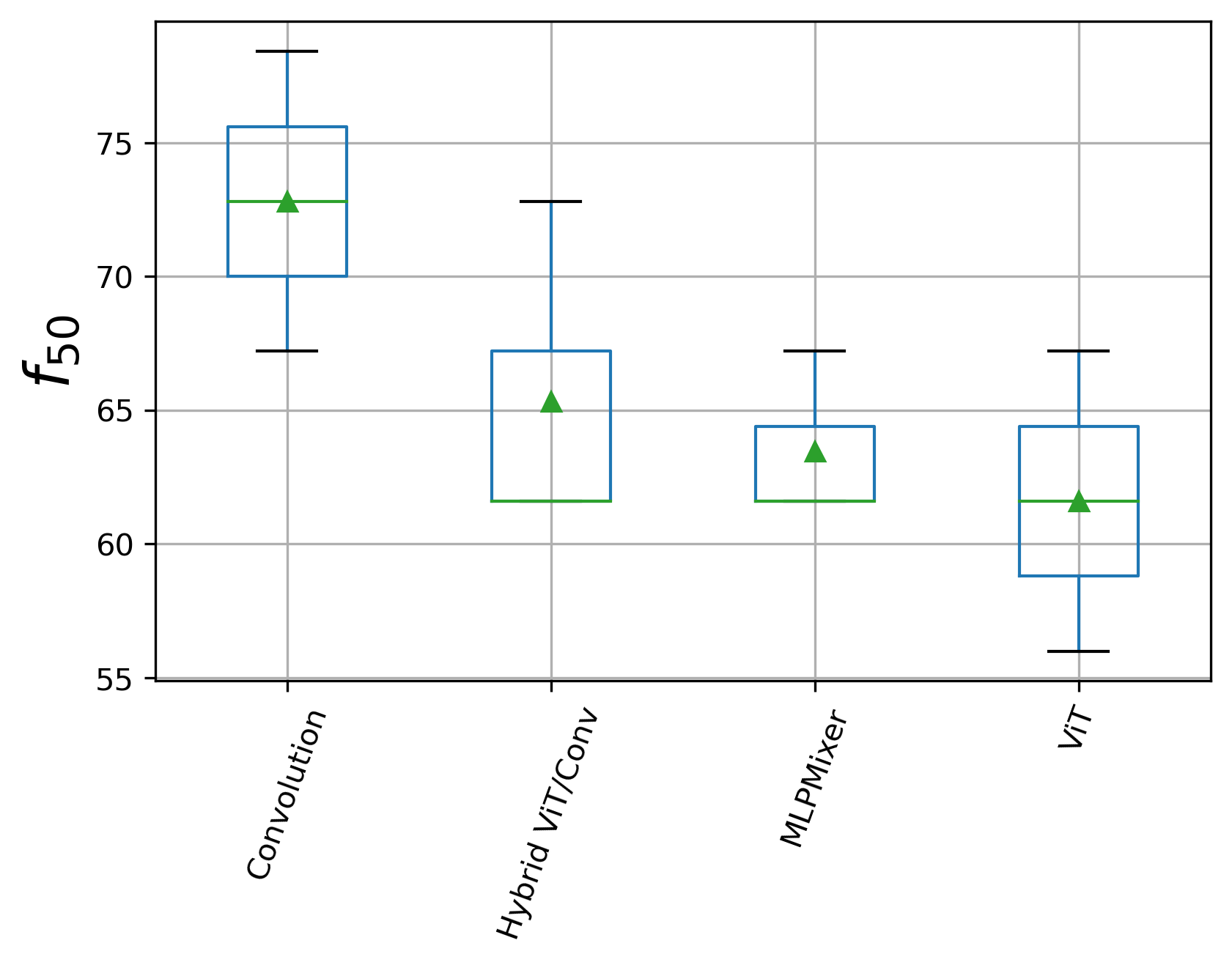}
\end{subfigure}
\caption{(Top) Radial Energy of the Adversarial Perturbations of Convolutional (Blue), Hybrid Vision Transformer and Convolutional Model (Orange), MLPMixer Variants (Green) and Vision Transformer (purple) models. (Bottom) Half Power Frequency $f_{50}$ for each model type. All the plots are computed over the 50000 validation set images. Convolutional Models have the most energy in the high frequencies across all models.}\label{fig:transformer_imagenet}
\end{figure*}

\textit{We hypothesize that the lack of convolutional operations in ViTs is responsible for the greater robustness to high frequency attacks}. In order to test our hypothesis, we trained a ViT with different patch sizes on CIFAR-10 (See Supp. Sec.~\ref{sec:supp-training-configs} for training details). All of these models achieved about 80$\%$ test accuracy. These are lower accuracies than state-of-the-art since those models are usually pretrained on ImageNet, a step we omitted being in the present work only interested in studying the effect of the model architecture. 
Then, we attacked the models with a PGD-Linf attack, the same used earlier in the paper. In Figure~\ref{fig:vit}, we observe the average frequency spectrum of ViTs with different word/patch sizes ($2 \times 2$, $4 \times 4$, $8 \times 8$), as compared to a ResNet18 with similar test accuracy (we stopped the training when the model had near 80\% test accuracy). We observe that all of the ViTs have more energy in the lower frequencies than the ResNet18. This strongly suggests that the robustness to high frequency attacks shown in \cite{shao2021adversarial} might be due to the lack of convolution layers. Furthermore, we observe that different patch/word sizes also have an impact on the spectrum, with smaller word sizes producing higher frequency spectra. This is consistent with our theoretical and empirical results comparing full-width and bounded-width convolutional models.

However, as others have suggested, some layers of the ViT can still be interpreted as convolutional layers, so the model is not entirely convolution-free~\cite{chen2021visformer}. In particular, the first embedding layer is shared across all image patches, and therefore is a convolutional layer with stride equal to the patch/kernel size. \textit{Can we remove this layer and shift the energy spectrum to even lower frequencies?} To test for this hypothesis we defined a new ViT architecture in which we removed the weight sharing in the first layer patch embedding by using a locally connected layer with the same kernel size and stride as the convolutional version, so that each patch has independent weights. We call this architecture the Vision Local Transformer (ViTLoc). Then, we trained this architecture on CIFAR-10 (see Performance in Sup. Table \ref{tbl:accuracy}) and computed the associated adversarial perturbations. In Figure~\ref{fig:vitloc}, we observe that every ViTLoc model has a lower frequency spectrum than its ViT counterpart. This is more evidence that avoiding the convolutional parametrization, even in very complex state-of-the-art models, can reduce the high frequency bias, and therefore, high frequency adversarial perturbations.

\subsection{Expanding to State-of-the-art Machine learning models}\label{sec:imagenet}
Here we want to further test the high frequency bias of bounded width convolutional models in models trained on one of the most complex dataset of image recognition: Imagenet~\cite{Imagenet}.


We selected different models from the timm package \cite{timm}, that are representative of complex models with different model parametrizations. All the models were selected from 4 groups: Convolution-Based Models, Vision Transformers, Hybrid ViT and Convolutional Models, and MLPs models (see Supp Section~\ref{sec:architectures} for specifics pretrained timm models). All selected models have similar data augmentation procedures since data augmentation can affect the frequency sensitivity of a model~\cite{li2022robust,hermann2020origins, yin2019fourier}.

Figure \ref{fig:transformer_imagenet} shows the radial integral of the energy of the adversarial perturbations of the ImageNet validation set: we observe that the convolutional-based models (Blue) have less energy in the low frequencies and more in the high frequencies compared to all other models. As for the CIFAR10 dataset, the ViT models (Purple) have lower frequency adversarial. More interestingly, models with larger first layer kernels (ViT (32)) have more energy in the low frequencies compared to models with smaller kernel sizes (ViT(16), ViT(8)). This confirms the results in section \ref{sec:nonconv} with models trained on CIFAR10.  Interestingly, the work in \cite{park2022vision} showed that self-attention layers can produce models with lower frequency preference and that a combination of the Vision Transformers with Convolutions generates a compromise frequency preference. Here we confirm those findings showing that hybrid models already have a $f_{50}$ in between ViTs and Convolution models and also find that also MLP-Based models have similar energy distribution compared to hybrid models. This indicates that self-attention is not the only mechanism to achieve a low frequency bias, and that the linear layer parametrization seem to be able to achieve similar effects. All together these results show that, regardless of the dataset, convolution-based models have a preferences towards higher frequency features compared to the non-convolutional counterparts.
\section*{Conclusions}\label{sec:concl-fw}

We provide both empirical as well as theoretical evidence that one of the causes of the high-frequency adversarial examples is the convolutional architecture of modern high-performance networks and, in particular, the locality/boundness of the convolutional kernels. To this end, we first confirmed the theoretical results regarding the network implicit bias in deep linear full-width convolutional models~\cite{gunasekar2018implicit}, extend them to nonlinear models (section \ref{sec:fullwidth}) and correlate the end-to-end weights of the model with the adversarial perturbation.


We then considered deep linear bounded-width convolutions and derived new theoretical results showing a bias towards high-frequency features compared to other model parametrizations in section~\ref{sec:boundedwidth}. 


In section \ref{sec:translinv} we showed that convolutional weights sharing is a key factor in producing the high frequency bias in networks.

Moreover, by a novel steganography experiment, we could clearly confirm that this bias extends to nonlinear models, and that it strongly influences what features can be easily learned by the models (section \ref{sec:shortcutfeats}). We think is a new and exciting directions to probe the ability of the model to generalize along any basis not just Fourier. In addition, our work has shown that adversarial perturbations can be an important technique to analyze the features learned by a neural networks at the end or during training. 

Furthermore, by using different datasets, we were able to show that convolution-based models have more energy in the high frequencies \textbf{if} high frequency useful information is present. This results deviate from the general understanding of adversarial perturbations and generalization, where high frequency adversarial attacks are only determined by the dataset statistics. Instead, we show that even in datasets with higher frequency information such as CIFAR10, CIFAR100 and SVHN, models with non-convolutional architectures (e.g. fully connected, locally connected, and Vision Transformers) do not present high frequency adversarial attacks (section \ref{sec:nonconv}). In addition we demonstrated that ViTs with smaller kernel sizes have more energy in the high frequencies for both CIFAR10 and ImageNet trained models, suggesting  that these results are robust regardless of dataset statistics. Moreover removing the convolutional layer in ViTs (Local Vision Transformer Architecture), allow us to reduced the energy concentration of Transformer models even further. This shows that one could potentially analyze the implicit bias of different architectural motifs in simple models and extract those principles into more complex scenarios.

However, recent work has shown that self attention does induce a low frequency bias in ViTs \cite{park2022vision}. They showed that self-attention layers have a low pass filtering effect on the features received from the previous layer and therefore this is why ViTs are more focused on low-frequency features \cite{park2022vision}. Here we show that other non-convolutional architectures such as MLPMixer, ResMLP and gMixer have lower frequency adversarial attacks similar to ViTs even without self-attention layers. This shows that self-attention is not the only mechanism to achieve low-frequency biased models, and that the linear layer parametrization also has a significant effect in this behavior. We think this is important because ViTs, MLPs or Convolution-based models such as ResNets, have multiple components such as Norm Layers, Skip Connections, Attention, Layer Mixing, etc, that may overpower the linear parametrization in defining the implicit bias of the model. However, we found that the linear layer parametrization seem to be crucial to the nature of the features learned and therefore for the adversarial perturbations. 

Finally, much work about generalization and adversarial attacks has been centered around bias in datasets~\cite{wang2020high,ilyas2019adversarial}, however, here we show that attention needs to be paid to the bias caused by architectural choices as well. This work is just the start of establishing the relationship between implicit bias due to model parametrization and the robustness of neural networks to different kinds of input perturbations. We believe this understanding can help drive models towards the most useful set of features and reduce the adversarial susceptibility of modern neural networks.

\section*{Acknowledgments}

This research has been funded by the NSF NeuroNex program through grant DBI-1707400. This research was also supported by Intelligence Advanced Research Projects Activity (IARPA) via Department of Interior/Interior Business Center (DoI/IBC) contract number D16PC00003. The U.S. Government is authorized to reproduce and distribute reprints for Governmental purposes notwithstanding any copyright annotation thereon. Disclaimer: The views and conclusions contained herein are those of the authors and should not be interpreted as necessarily representing the official policies or endorsements, either expressed or implied, of IARPA, DoI/IBC, or the U.S. Government.

\section*{Acknowledgments}

This research has been funded by the NSF NeuroNex program through grant DBI-1707400. This research was also supported by Intelligence Advanced Research Projects Activity (IARPA) via Department of Interior/Interior Business Center (DoI/IBC) contract number D16PC00003. The U.S. Government is authorized to reproduce and distribute reprints for Governmental purposes notwithstanding any copyright annotation thereon. Disclaimer: The views and conclusions contained herein are those of the authors and should not be interpreted as necessarily representing the official policies or endorsements, either expressed or implied, of IARPA, DoI/IBC, or the U.S. Government.
\section{Methods}\label{sec:methods}

\subsection{Neural Network Training.} Each architecture was defined by the number of hidden layers, and nonlinearities (See Supp Table~\ref{tbl:architectures}). In terms of hyperparameters, we tuned the maximum learning rates for each model by starting from a base learning rate of $0.1$, and then, if there were visible failures during training (most commonly, the model converging to chance performance), we adjusted the learning rate up/down by a factor of $10$ or $50$. Amongst the model architectures we explored, the only hyper-parameter that was tuned was the learning rate. The final values of the learning rates after search are detailed in Table~\ref{tbl:lr}. In addition, all the models were trained with linearly decaying learning rate follow $0.3$ factor for each epoch and resetting the learning rate back to max when the model was trained at least $20$ epochs. All the models were trained on a single GTX $1080$ Ti for at least $40$ epochs  ($30$ to $120$ GPU minutes), and we choose the epoch with the highest validation set accuracy for further experiments (see hyperparameters of training and accuracy on Supp Sec.~\ref{sec:supp-training-configs})

\subsection{Adversarial Attack Generation.} We used the Foolbox package \cite{rauber2017foolbox} (MIT license) to generate adversarial perturbations $\delta$ for every example in the test set for a fully trained model (PGD-Linf, PGD-L2, PGD-L1\cite{kurakin2016adversarial}, BB-Linf and BB-L2 \cite{brendel2019accurate}). Finally, we computed the 2-D Discrete Fourier spectrum $\hat{\delta} := \mathcal{F}\delta$ of the perturbation $\delta$. Details of the attacks are available in Supp. Table~\ref{tbl:hyper_attack}.

\subsection{$\beta$ Calculation and Toeplitz Matrix.} 

For the computation of $\beta$, we used two different methods. For the linear models, we transformed the weights of every architecture into their matrix form. For example for the convolutional operation, we generated a Toeplitz matrix per convolutional filter and then calculated  the dot product of the first $l$ matrices to get the $\beta_{l}$  (or for all $l=1,\cdots,L$ to get the input-output function $\beta$). For the nonlinear models, because the nonlinearities does not allow us to use the weights directly, we decided to use a proxy, the saliency map. Saliency Map is the gradient ($\frac{df}{dx}$) of the function ($f(x)$) with respect to the input image ($x$). In the linear case, these gradients are exactly the weights of the function $\beta$ (up to a constant), which we confirmed using the Toeplitz computation above. For the nonlinear models, because the weights used changed per example, the gradient gave us a good approximation of those weights.

\subsection{Generation of Hidden Shortcut Features.} For each class, we sampled a $3 \times 32 \times 32$ matrix of scalars from a standard Gaussian distribution with mean of 0 and standard deviation of 1 ($N_{class}$). Then we multiplied this matrix by a scalar factor $\epsilon$, which was selected via hyperparameter search. Next, we generated a masking matrix ($M_{class}$) of the same size, $3 \times 32 \times 32$. For the sparse shorcut feature experiment this matrix was generated from a Bernoulli distribution with an specific average sparsity level $\rho$. For the frequency-based class dependent features, we selected  an upper and lower bound radii as a factor of the max frequency ($0.1$ and $0.2$; $0.4$ and $0.5$; and $0.8$ and $0.9$ for the low, medium and high frequencies respectively). These were the ranges used to produce the masking matrix $M_{class}$. Next, we computed the hardmard product of $M_{class}$ with $N_{class}$. Then, this class-specific shortcut feature was added into the Fourier spectrum of CIFAR-10 train and test images corresponding to their respective classes. The mathematical definitions are as follows:

\begin{align*}
     N_{\textrm{class}} &\sim \mathcal{N}(0, 1) \\
     M_{class} &= \begin{cases} \mbox{Sparse} & M_{\textrm{class}} \sim \textrm{Bernoulli}(\rho) \\ \mbox{Frequency} & M_{\textrm{class}} = \begin{cases} \mbox{Inside frequency range} & $1$ \\ \mbox{Outside frequency range} & $0$\\ \end{cases}\end{cases} \\
     \eta_{\textrm{class}} &= M_{\textrm{class}} \odot (\epsilon * N_{\textrm{class}}),\;\;\;\;\;\;  \hat{\tilde{x}} = \hat{x} + \eta_{\textrm{class}}. 
\end{align*}

\medskip
\bibliography{neurips2021}

\begin{thebibliography}{43}
\providecommand{\natexlab}[1]{#1}
\providecommand{\url}[1]{\texttt{#1}}
\expandafter\ifx\csname urlstyle\endcsname\relax
  \providecommand{\doi}[1]{doi: #1}\else
  \providecommand{\doi}{doi: \begingroup \urlstyle{rm}\Url}\fi

\bibitem[Gilmer et~al.(2018)Gilmer, Metz, Faghri, Schoenholz, Raghu,
  Wattenberg, and Goodfellow]{gilmer2018adversarial}
Justin Gilmer, Luke Metz, Fartash Faghri, Samuel~S Schoenholz, Maithra Raghu,
  Martin Wattenberg, and Ian Goodfellow.
\newblock Adversarial spheres.
\newblock \emph{arXiv preprint arXiv:1801.02774}, 2018.

\bibitem[Mahloujifar et~al.(2019)Mahloujifar, Diochnos, and
  Mahmoody]{mahloujifar2019curse}
Saeed Mahloujifar, Dimitrios~I Diochnos, and Mohammad Mahmoody.
\newblock The curse of concentration in robust learning: Evasion and poisoning
  attacks from concentration of measure.
\newblock In \emph{Proceedings of the AAAI Conference on Artificial
  Intelligence}, volume~33, pages 4536--4543, 2019.

\bibitem[Tanay and Griffin(2016)]{tanay2016boundary}
Thomas Tanay and Lewis Griffin.
\newblock A boundary tilting persepective on the phenomenon of adversarial
  examples.
\newblock \emph{arXiv preprint arXiv:1608.07690}, 2016.

\bibitem[Ford et~al.(2019)Ford, Gilmer, Carlini, and
  Cubuk]{ford2019adversarial}
Nic Ford, Justin Gilmer, Nicolas Carlini, and Dogus Cubuk.
\newblock Adversarial examples are a natural consequence of test error in
  noise.
\newblock \emph{arXiv preprint arXiv:1901.10513}, 2019.

\bibitem[Fawzi et~al.(2018)Fawzi, Fawzi, and Fawzi]{fawzi2018adversarial}
Alhussein Fawzi, Hamza Fawzi, and Omar Fawzi.
\newblock Adversarial vulnerability for any classifier.
\newblock In \emph{Advances in Neural Information Processing Systems}, pages
  1178--1187, 2018.

\bibitem[Bubeck et~al.(2018)Bubeck, Price, and
  Razenshteyn]{bubeck2018adversarial}
S{\'e}bastien Bubeck, Eric Price, and Ilya Razenshteyn.
\newblock Adversarial examples from computational constraints.
\newblock \emph{arXiv preprint arXiv:1805.10204}, 2018.

\bibitem[Goodfellow et~al.(2014)Goodfellow, Shlens, and
  Szegedy]{goodfellow2014explaining}
Ian~J Goodfellow, Jonathon Shlens, and Christian Szegedy.
\newblock Explaining and harnessing adversarial examples.
\newblock \emph{arXiv preprint arXiv:1412.6572}, 2014.

\bibitem[Schmidt et~al.(2018)Schmidt, Santurkar, Tsipras, Talwar, and
  Madry]{schmidt2018adversarially}
Ludwig Schmidt, Shibani Santurkar, Dimitris Tsipras, Kunal Talwar, and
  Aleksander Madry.
\newblock Adversarially robust generalization requires more data.
\newblock In \emph{Advances in Neural Information Processing Systems}, pages
  5014--5026, 2018.

\bibitem[Ilyas et~al.(2019)Ilyas, Santurkar, Tsipras, Engstrom, Tran, and
  Madry]{ilyas2019adversarial}
Andrew Ilyas, Shibani Santurkar, Dimitris Tsipras, Logan Engstrom, Brandon
  Tran, and Aleksander Madry.
\newblock Adversarial examples are not bugs, they are features.
\newblock In \emph{Advances in Neural Information Processing Systems}, pages
  125--136, 2019.

\bibitem[Yin et~al.(2019)Yin, Lopes, Shlens, Cubuk, and Gilmer]{yin2019fourier}
Dong Yin, Raphael~Gontijo Lopes, Jonathon Shlens, Ekin~D. Cubuk, and Justin
  Gilmer.
\newblock \emph{A Fourier Perspective on Model Robustness in Computer Vision}.
\newblock Curran Associates Inc., Red Hook, NY, USA, 2019.

\bibitem[Wang et~al.(2020)Wang, Wu, Huang, and Xing]{wang2020high}
Haohan Wang, Xindi Wu, Zeyi Huang, and Eric~P Xing.
\newblock High-frequency component helps explain the generalization of
  convolutional neural networks.
\newblock In \emph{Proceedings of the IEEE/CVF Conference on Computer Vision
  and Pattern Recognition}, pages 8684--8694, 2020.

\bibitem[Maiya et~al.(2021)Maiya, Ehrlich, Agarwal, Lim, Goldstein, and
  Shrivastava]{maiya2021frequency}
Shishira~R Maiya, Max Ehrlich, Vatsal Agarwal, Ser-Nam Lim, Tom Goldstein, and
  Abhinav Shrivastava.
\newblock A frequency perspective of adversarial robustness.
\newblock \emph{arXiv preprint arXiv:2111.00861}, 2021.

\bibitem[Li et~al.(2022)Li, Caro, Rusak, Brendel, Bethge, Anselmi, Patel,
  Tolias, and Pitkow]{li2022robust}
Zhe Li, Josue~Ortega Caro, Evgenia Rusak, Wieland Brendel, Matthias Bethge,
  Fabio Anselmi, Ankit~B Patel, Andreas~S Tolias, and Xaq Pitkow.
\newblock Robust deep learning object recognition models rely on low frequency
  information in natural images.
\newblock \emph{bioRxiv}, 2022.

\bibitem[Hermann et~al.(2020)Hermann, Chen, and Kornblith]{hermann2020origins}
Katherine Hermann, Ting Chen, and Simon Kornblith.
\newblock The origins and prevalence of texture bias in convolutional neural
  networks.
\newblock \emph{Advances in Neural Information Processing Systems},
  33:\penalty0 19000--19015, 2020.

\bibitem[Geirhos et~al.(2018)Geirhos, Rubisch, Michaelis, Bethge, Wichmann, and
  Brendel]{geirhos2018imagenet}
Robert Geirhos, Patricia Rubisch, Claudio Michaelis, Matthias Bethge, Felix~A
  Wichmann, and Wieland Brendel.
\newblock Imagenet trained cnns are biased towards texture; increasing shape
  bias improves accuracy and robustness.
\newblock \emph{arXiv preprint arXiv:1811.12231}, 2018.

\bibitem[Tsuzuku and Sato(2019)]{tsuzuku2019structural}
Yusuke Tsuzuku and Issei Sato.
\newblock On the structural sensitivity of deep convolutional networks to the
  directions of fourier basis functions.
\newblock In \emph{Proceedings of the IEEE/CVF Conference on Computer Vision
  and Pattern Recognition}, pages 51--60, 2019.

\bibitem[Ortiz-Jimenez et~al.(2020{\natexlab{a}})Ortiz-Jimenez, Modas,
  Moosavi-Dezfooli, and Frossard]{ortiz2020hold}
Guillermo Ortiz-Jimenez, Apostolos Modas, Seyed-Mohsen Moosavi-Dezfooli, and
  Pascal Frossard.
\newblock Hold me tight! influence of discriminative features on deep network
  boundaries.
\newblock \emph{arXiv preprint arXiv:2002.06349}, 2020{\natexlab{a}}.

\bibitem[Ortiz-Jimenez et~al.(2020{\natexlab{b}})Ortiz-Jimenez, Modas,
  Moosavi-Dezfooli, and Frossard]{ortiz2020neural}
Guillermo Ortiz-Jimenez, Apostolos Modas, Seyed-Mohsen Moosavi-Dezfooli, and
  Pascal Frossard.
\newblock Neural anisotropy directions.
\newblock \emph{arXiv preprint arXiv:2006.09717}, 2020{\natexlab{b}}.

\bibitem[Gunasekar et~al.(2018)Gunasekar, Lee, Soudry, and
  Srebro]{gunasekar2018implicit}
Suriya Gunasekar, Jason~D Lee, Daniel Soudry, and Nati Srebro.
\newblock Implicit bias of gradient descent on linear convolutional networks.
\newblock In \emph{Advances in Neural Information Processing Systems}, pages
  9461--9471, 2018.

\bibitem[Yun et~al.(2020)Yun, Krishnan, and Mobahi]{yun2020unifying}
Chulhee Yun, Shankar Krishnan, and Hossein Mobahi.
\newblock A unifying view on implicit bias in training linear neural networks.
\newblock \emph{arXiv preprint arXiv:2010.02501}, 2020.

\bibitem[Sahs et~al.(2020{\natexlab{a}})Sahs, Damaraju, Pyle, Tavaslioglu,
  Caro, Lu, and Patel]{sahs2020a}
Justin Sahs, Aneel Damaraju, Ryan Pyle, Onur Tavaslioglu, Josue~Ortega Caro,
  Hao~Yang Lu, and Ankit Patel.
\newblock A functional characterization of randomly initialized gradient
  descent in deep re{\{}lu{\}} networks, 2020{\natexlab{a}}.
\newblock URL \url{https://openreview.net/forum?id=BJl9PRVKDS}.

\bibitem[Williams et~al.(2019)Williams, Trager, Panozzo, Silva, Zorin, and
  Bruna]{williams2019gradient}
Francis Williams, Matthew Trager, Daniele Panozzo, Claudio Silva, Denis Zorin,
  and Joan Bruna.
\newblock Gradient dynamics of shallow univariate relu networks.
\newblock In \emph{Advances in Neural Information Processing Systems}, pages
  8376--8385, 2019.

\bibitem[Woodworth et~al.(2020)Woodworth, Gunasekar, Lee, Moroshko, Savarese,
  Golan, Soudry, and Srebro]{woodworth2020kernel}
Blake Woodworth, Suriya Gunasekar, Jason~D Lee, Edward Moroshko, Pedro
  Savarese, Itay Golan, Daniel Soudry, and Nathan Srebro.
\newblock Kernel and rich regimes in overparametrized models.
\newblock \emph{arXiv preprint arXiv:2002.09277}, 2020.

\bibitem[Sahs et~al.(2020{\natexlab{b}})Sahs, Pyle, Damaraju, Caro,
  Tavaslioglu, Lu, and Patel]{sahs2020shallow}
Justin Sahs, Ryan Pyle, Aneel Damaraju, Josue~Ortega Caro, Onur Tavaslioglu,
  Andy Lu, and Ankit Patel.
\newblock Shallow univariate relu networks as splines: Initialization, loss
  surface, hessian, \& gradient flow dynamics.
\newblock \emph{arXiv preprint arXiv:2008.01772}, 2020{\natexlab{b}}.

\bibitem[Li et~al.(2019)Li, Wang, Yu, Du, Hu, Salakhutdinov, and
  Arora]{li2019enhanced}
Zhiyuan Li, Ruosong Wang, Dingli Yu, Simon~S Du, Wei Hu, Ruslan Salakhutdinov,
  and Sanjeev Arora.
\newblock Enhanced convolutional neural tangent kernels.
\newblock \emph{arXiv preprint arXiv:1911.00809}, 2019.

\bibitem[Arora et~al.(2019)Arora, Du, Hu, Li, Salakhutdinov, and
  Wang]{arora2019exact}
Sanjeev Arora, Simon~S Du, Wei Hu, Zhiyuan Li, Russ~R Salakhutdinov, and
  Ruosong Wang.
\newblock On exact computation with an infinitely wide neural net.
\newblock In \emph{Advances in Neural Information Processing Systems}, pages
  8139--8148, 2019.

\bibitem[Krizhevsky et~al.(2009)Krizhevsky, Hinton,
  et~al.]{krizhevsky2009learning}
Alex Krizhevsky, Geoffrey Hinton, et~al.
\newblock Learning multiple layers of features from tiny images.
\newblock 2009.

\bibitem[LeCun and Cortes(2010)]{MNIST}
Yann LeCun and Corinna Cortes.
\newblock {MNIST} handwritten digit database.
\newblock 2010.
\newblock URL \url{http://yann.lecun.com/exdb/mnist/}.

\bibitem[Xiao et~al.(2017)Xiao, Rasul, and Vollgraf]{xiao2017fashionmnist}
Han Xiao, Kashif Rasul, and Roland Vollgraf.
\newblock Fashion-mnist: a novel image dataset for benchmarking machine
  learning algorithms, 2017.
\newblock URL \url{http://arxiv.org/abs/1708.07747}.

\bibitem[Netzer et~al.(2011)Netzer, Wang, Coates, Bissacco, Wu, and Ng]{SVHN}
Yuval Netzer, Tao Wang, Adam Coates, Alessandro Bissacco, Bo~Wu, and Andrew~Y.
  Ng.
\newblock Reading digits in natural images with unsupervised feature learning.
\newblock In \emph{NIPS Workshop on Deep Learning and Unsupervised Feature
  Learning 2011}, 2011.

\bibitem[Paszke et~al.(2019)Paszke, Gross, Massa, Lerer, Bradbury, Chanan,
  Killeen, Lin, Gimelshein, Antiga, et~al.]{paszke2019pytorch}
Adam Paszke, Sam Gross, Francisco Massa, Adam Lerer, James Bradbury, Gregory
  Chanan, Trevor Killeen, Zeming Lin, Natalia Gimelshein, Luca Antiga, et~al.
\newblock Pytorch: An imperative style, high-performance deep learning library.
\newblock \emph{Advances in neural information processing systems},
  32:\penalty0 8026--8037, 2019.

\bibitem[Madry et~al.(2017)Madry, Makelov, Schmidt, Tsipras, and
  Vladu]{madry2017towards}
Aleksander Madry, Aleksandar Makelov, Ludwig Schmidt, Dimitris Tsipras, and
  Adrian Vladu.
\newblock Towards deep learning models resistant to adversarial attacks.
\newblock \emph{arXiv preprint arXiv:1706.06083}, 2017.

\bibitem[Cheddad et~al.(2010)Cheddad, Condell, Curran, and
  Mc~Kevitt]{cheddad2010digital}
Abbas Cheddad, Joan Condell, Kevin Curran, and Paul Mc~Kevitt.
\newblock Digital image steganography: Survey and analysis of current methods.
\newblock \emph{Signal processing}, 90\penalty0 (3):\penalty0 727--752, 2010.

\bibitem[Dosovitskiy et~al.(2020)Dosovitskiy, Beyer, Kolesnikov, Weissenborn,
  Zhai, Unterthiner, Dehghani, Minderer, Heigold, Gelly,
  et~al.]{dosovitskiy2020image}
Alexey Dosovitskiy, Lucas Beyer, Alexander Kolesnikov, Dirk Weissenborn,
  Xiaohua Zhai, Thomas Unterthiner, Mostafa Dehghani, Matthias Minderer, Georg
  Heigold, Sylvain Gelly, et~al.
\newblock An image is worth 16x16 words: Transformers for image recognition at
  scale.
\newblock \emph{arXiv preprint arXiv:2010.11929}, 2020.

\bibitem[Shao et~al.(2021)Shao, Shi, Yi, Chen, and Hsieh]{shao2021adversarial}
Rulin Shao, Zhouxing Shi, Jinfeng Yi, Pin-Yu Chen, and Cho-Jui Hsieh.
\newblock On the adversarial robustness of visual transformers.
\newblock \emph{arXiv preprint arXiv:2103.15670}, 2021.

\bibitem[Chen et~al.(2021)Chen, Xie, Niu, Liu, Wei, and
  Tian]{chen2021visformer}
Zhengsu Chen, Lingxi Xie, Jianwei Niu, Xuefeng Liu, Longhui Wei, and Qi~Tian.
\newblock Visformer: The vision-friendly transformer.
\newblock \emph{arXiv preprint arXiv:2104.12533}, 2021.

\bibitem[Deng et~al.(2009)Deng, Dong, Socher, Li, Li, and Fei-Fei]{Imagenet}
Jia Deng, Wei Dong, Richard Socher, Li-Jia Li, Kai Li, and Li~Fei-Fei.
\newblock Imagenet: A large-scale hierarchical image database.
\newblock In \emph{2009 IEEE Conference on Computer Vision and Pattern
  Recognition}, pages 248--255, 2009.

\bibitem[Wightman(2019)]{timm}
Ross Wightman.
\newblock Pytorch image models.
\newblock \url{https://github.com/rwightman/pytorch-image-models}, 2019.

\bibitem[Park and Kim(2022)]{park2022vision}
Namuk Park and Songkuk Kim.
\newblock How do vision transformers work?
\newblock \emph{arXiv preprint arXiv:2202.06709}, 2022.

\bibitem[Rauber et~al.(2017)Rauber, Brendel, and Bethge]{rauber2017foolbox}
Jonas Rauber, Wieland Brendel, and Matthias Bethge.
\newblock Foolbox: A python toolbox to benchmark the robustness of machine
  learning models.
\newblock \emph{arXiv preprint arXiv:1707.04131}, 2017.

\bibitem[Kurakin et~al.(2016)Kurakin, Goodfellow, and
  Bengio]{kurakin2016adversarial}
Alexey Kurakin, Ian Goodfellow, and Samy Bengio.
\newblock Adversarial examples in the physical world.
\newblock \emph{arXiv preprint arXiv:1607.02533}, 2016.

\bibitem[Brendel et~al.(2019)Brendel, Rauber, K{\"u}mmerer, Ustyuzhaninov, and
  Bethge]{brendel2019accurate}
Wieland Brendel, Jonas Rauber, Matthias K{\"u}mmerer, Ivan Ustyuzhaninov, and
  Matthias Bethge.
\newblock Accurate, reliable and fast robustness evaluation.
\newblock In \emph{Advances in Neural Information Processing Systems}, pages
  12861--12871, 2019.

\bibitem[Ghobber and Jaming(2011)]{ghobber2011uncertainty}
Saifallah Ghobber and Philippe Jaming.
\newblock On uncertainty principles in the finite dimensional setting.
\newblock \emph{Linear algebra and its applications}, 435\penalty0
  (4):\penalty0 751--768, 2011.

\end{thebibliography}
\bibliographystyle{unsrtnat}

\appendix
\newpage
\onecolumn
\section{Supplementary Material}\label{sec:supp-extras}

\subsection{Radial energy distribution for different Datasets}\label{sec:different_datasets}

\begin{suppfigure}[h]
    \centering
    \includegraphics[width=0.9\textwidth]{plot_1D_cifar10.png}
    \caption{Radial Integral of the average frequency spectrum of adversarial perturbation ($\hat{\delta}$) and input-output weights ($\hat{\beta}$) for Fully Connected (blue), Full-Width Convolutional (green), Locally Connected (purple) and Bounded-Width Convolutional (pink) models for PGD-Linf Attack for CIFAR10. We can observe the correlation between the input-output weights and the adversarial perturbations for each model.}
    \label{fig:attacks_cifar10_1D}
\end{suppfigure}

\begin{suppfigure}[h]
    \centering
    \includegraphics[width=0.9\textwidth]{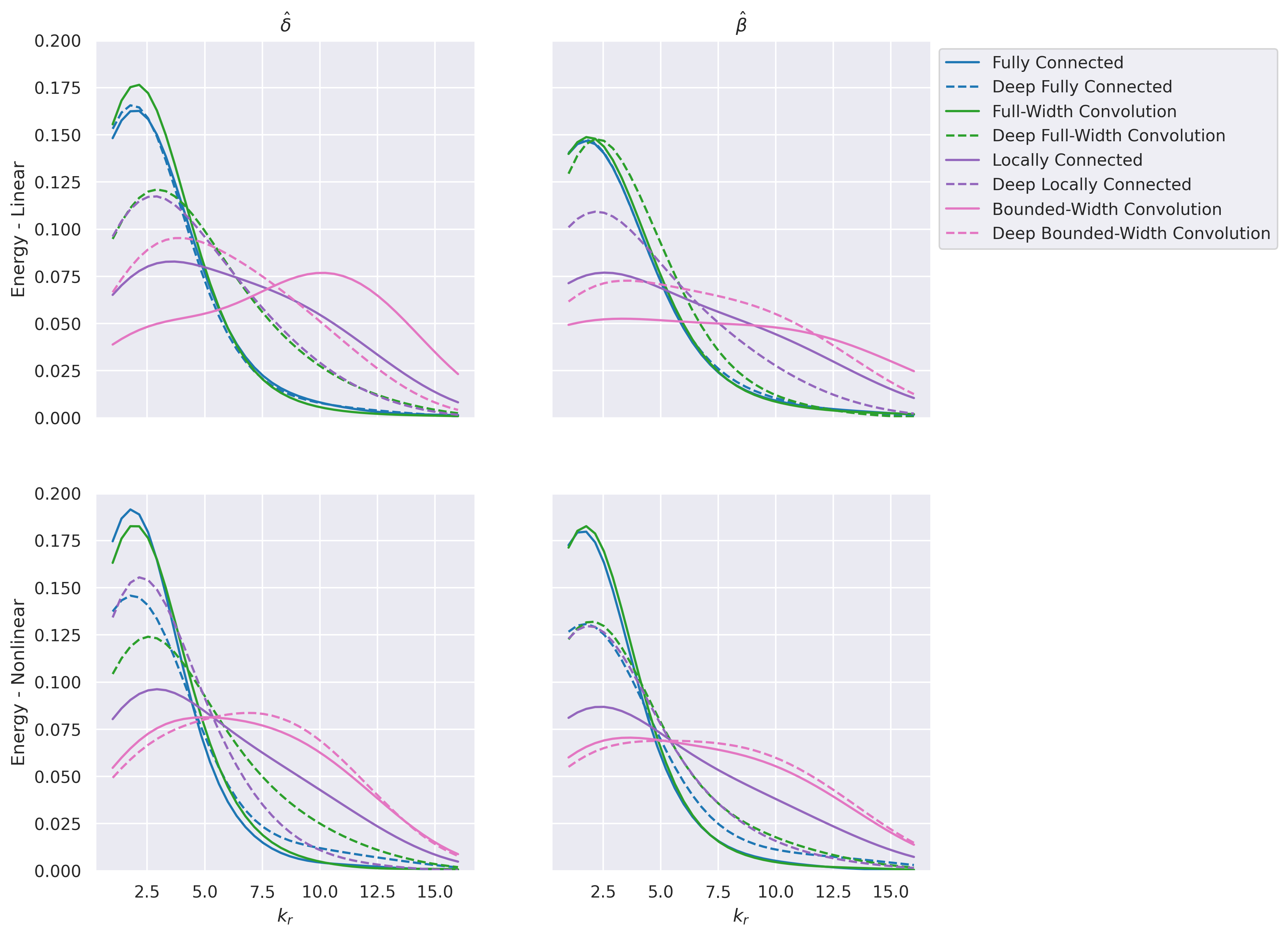}
    \caption{Radial Integral of the average frequency spectrum of adversarial perturbation ($\hat{\delta}$) and input-output weights ($\hat{\beta}$) for Fully Connected (blue), Full-Width Convolutional (green), Locally Connected (purple) and Bounded-Width Convolutional (pink) models for PGD-Linf Attack for CIFAR100. We can observe the correlation between the input-output weights and the adversarial perturbations for each model.}
    \label{fig:attacks_cifar100_1D}
\end{suppfigure}

\begin{suppfigure}[h]
    \centering
    \includegraphics[width=0.9\textwidth]{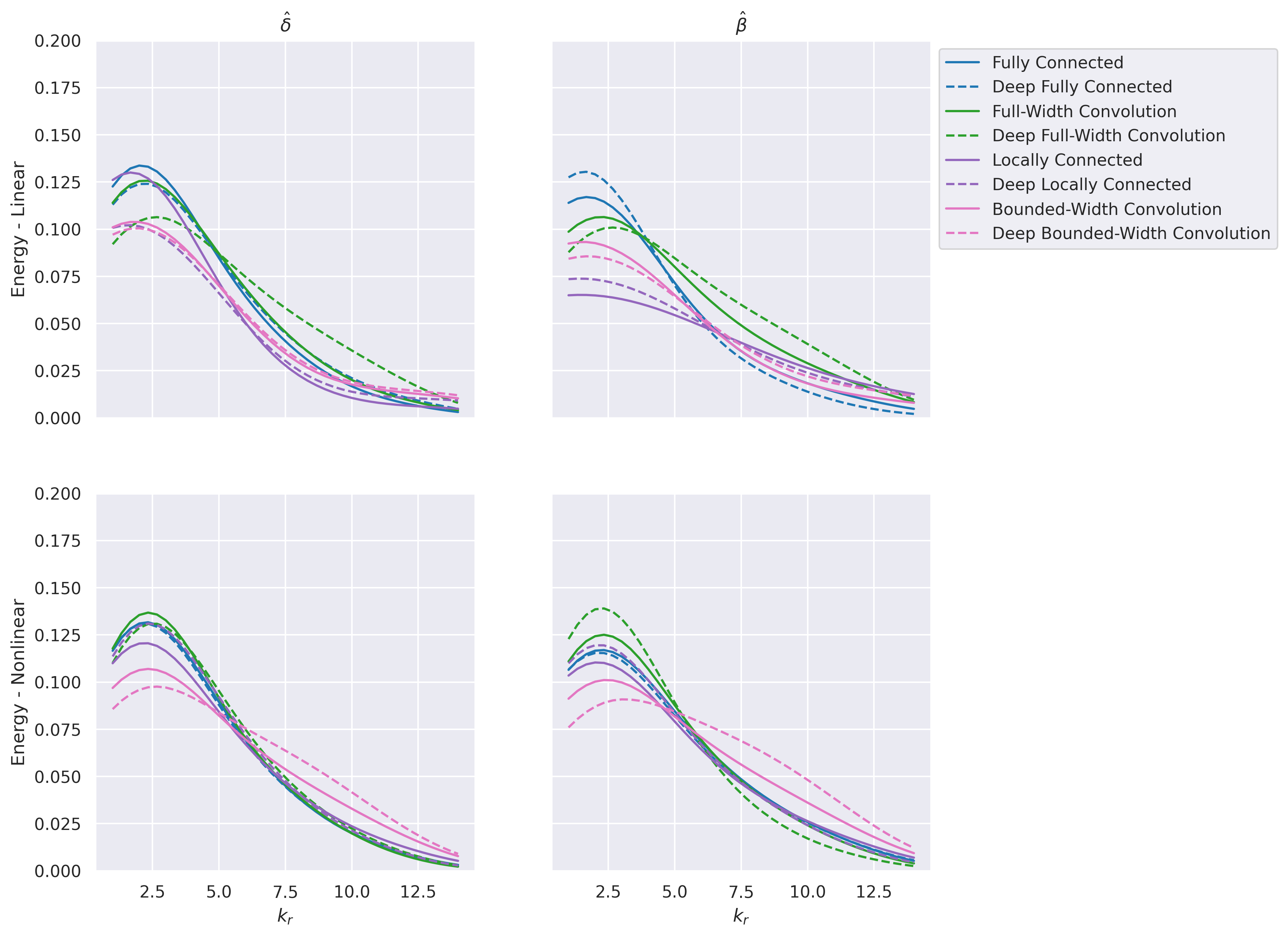}
    \caption{Radial Integral of the average frequency spectrum of adversarial perturbation ($\hat{\delta}$) and input-output weights ($\hat{\beta}$) for Fully Connected (blue), Full-Width Convolutional (green), Locally Connected (purple) and Bounded-Width Convolutional (pink) models for PGD-Linf Attack for FashionMNIST. We can observe the correlation between the input-output weights and the adversarial perturbations for each model.}
    \label{fig:attacks_FashionMNIST_1D}
\end{suppfigure}

\begin{suppfigure}[h]
    \centering
    \includegraphics[width=0.9\textwidth]{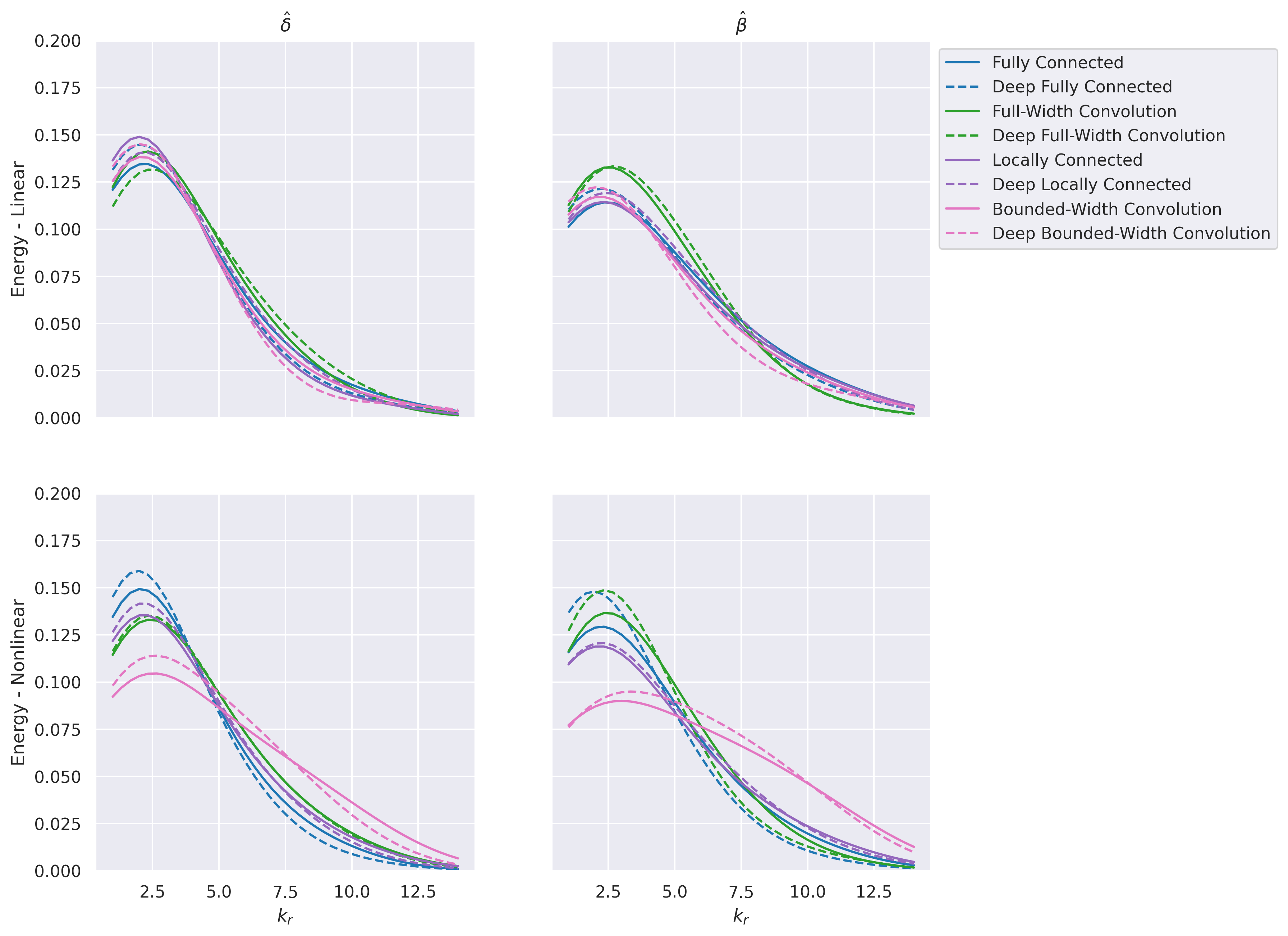}
    \caption{Radial Integral of the average frequency spectrum of adversarial perturbation ($\hat{\delta}$) and input-output weights ($\hat{\beta}$) for Fully Connected (blue), Full-Width Convolutional (green), Locally Connected (purple) and Bounded-Width Convolutional (pink) models for PGD-Linf Attack for MNIST. We can observe the correlation between the input-output weights and the adversarial perturbations for each model.}
    \label{fig:attacks_MNIST_1D}
\end{suppfigure}

\begin{suppfigure}[h]
    \centering
    \includegraphics[width=0.9\textwidth]{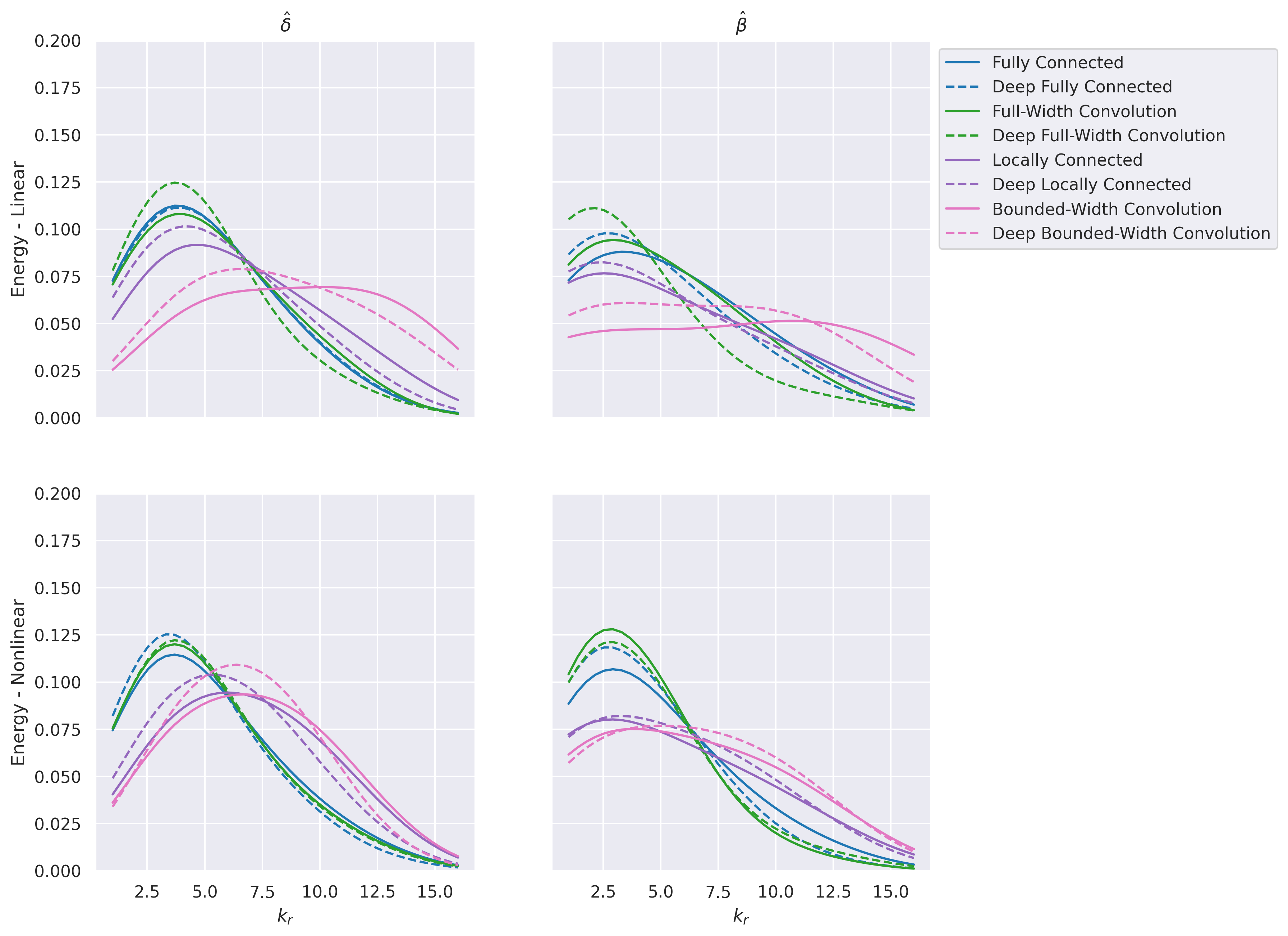}
    \caption{Radial Integral of the average frequency spectrum of adversarial perturbation ($\hat{\delta}$) and input-output weights ($\hat{\beta}$) for Fully Connected (blue), Full-Width Convolutional (green), Locally Connected (purple) and Bounded-Width Convolutional (pink) models for PGD-Linf Attack for SVHN. We can observe the correlation between the input-output weights and the adversarial perturbations for each model.}
    \label{fig:attacks_SVHN_1D}
\end{suppfigure}

\begin{suppfigure}[h]
    \centering
    \includegraphics[width=0.9\textwidth]{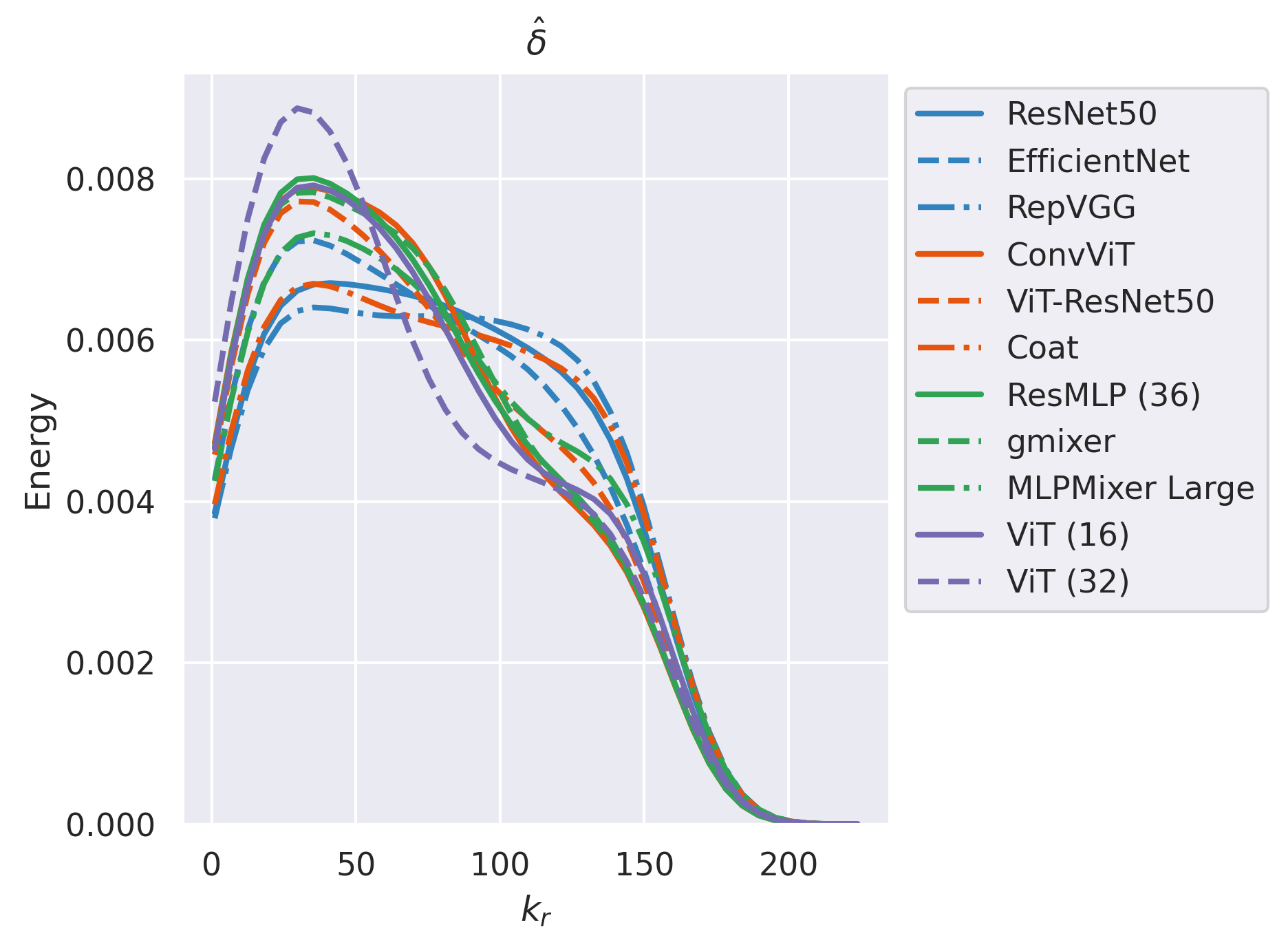}
    \caption{Radial Energy of the Adversarial Perturbations of Convolutional (Blue), Hybrid Vision Transformer and Convolutional Model (Orange), MLPMixer Variants (Green) and Vision Transformer (purple) models. All the plots are computed over the 50000 validation set images. Convolutional Models have the most energy in the high frequencies across all models.}
    \label{fig:attacks_imagenet_1D}
\end{suppfigure}



\begin{figure*}[h]
\begin{subfigure}{\textwidth}
  \centering
  \includegraphics[trim={0 0 0 0},clip,width=0.8\textwidth]{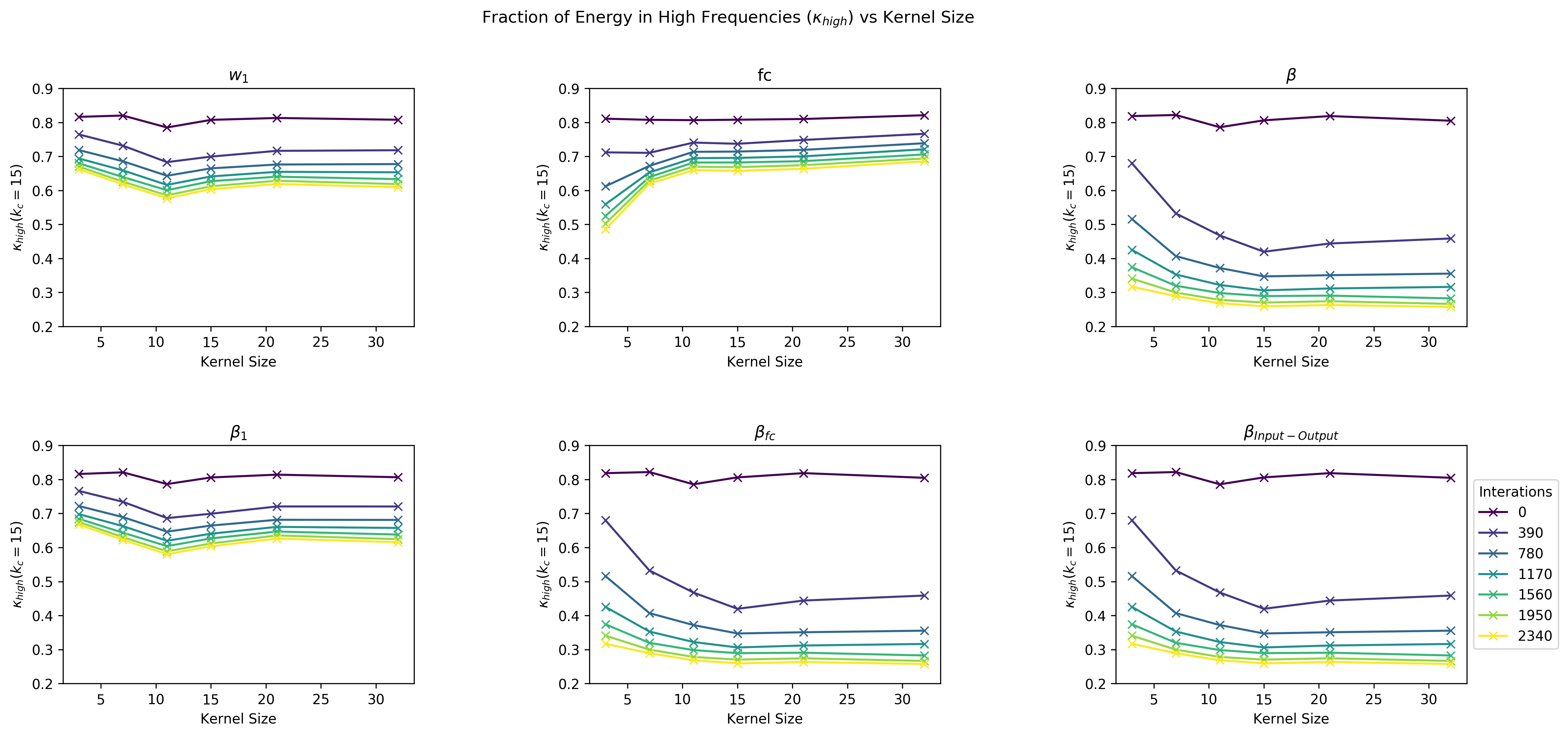}
  \caption{}\label{fig:single}
\end{subfigure}
\hfill
\begin{subfigure}{\textwidth}
  \centering
  \includegraphics[trim={0 0 0 0},clip,width=0.8\textwidth]{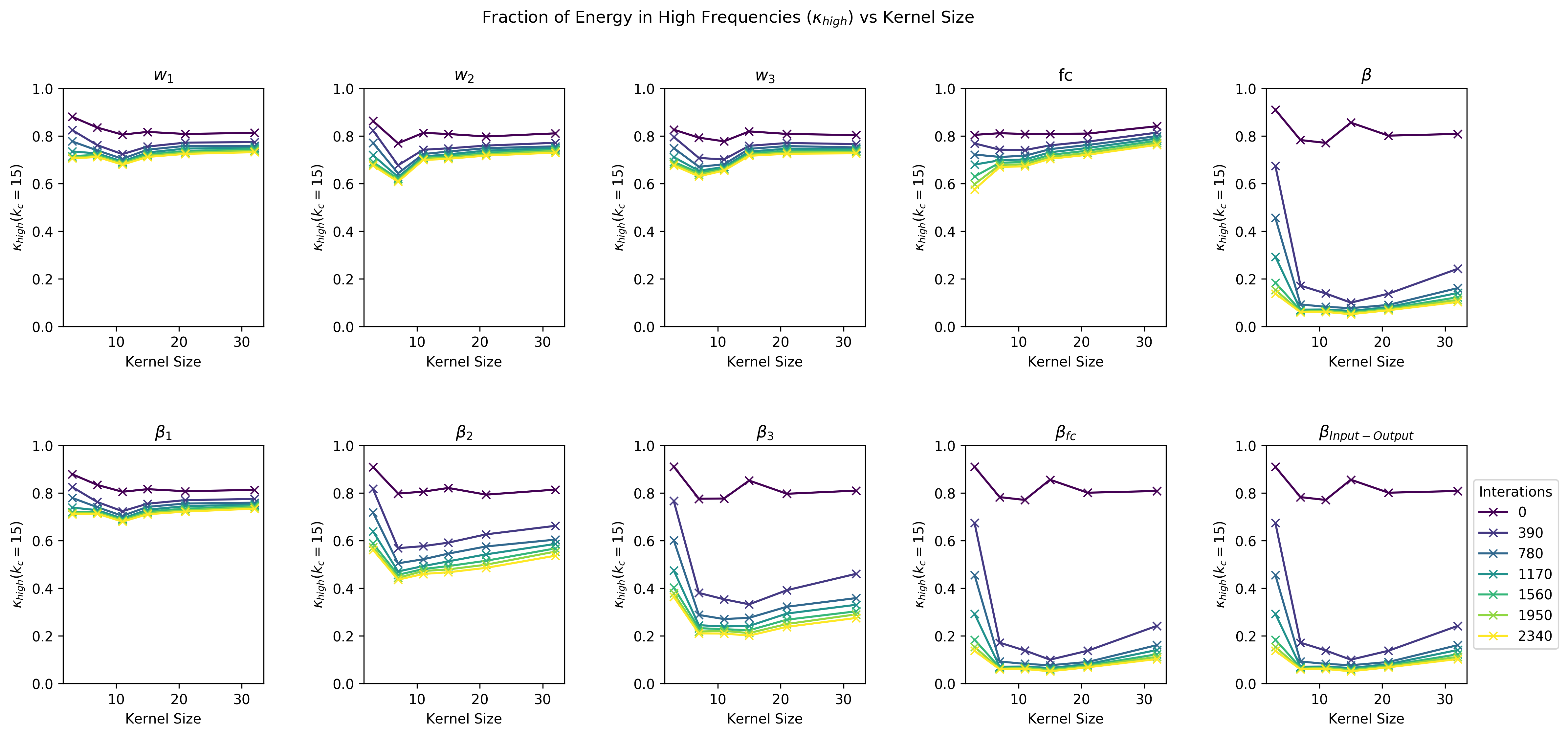}
  \caption{}\label{fig:multi}
\end{subfigure}
  \caption{Concentration of energy $\kappa_{high}$ for $\kappa_{r}$ = 15 for each $\beta_{l}$ for all $l$ versus kernel size (3,7,9,11,15, 21, or 32). All models were trained for 40 epochs on Grayscale CIFAR10. (a) One hidden layer models (b) Three hidden layer models.  }\label{fig:kappa} 
\end{figure*}

\begin{suppfigure}[h]
    \centering
    \includegraphics[width=0.9\textwidth]{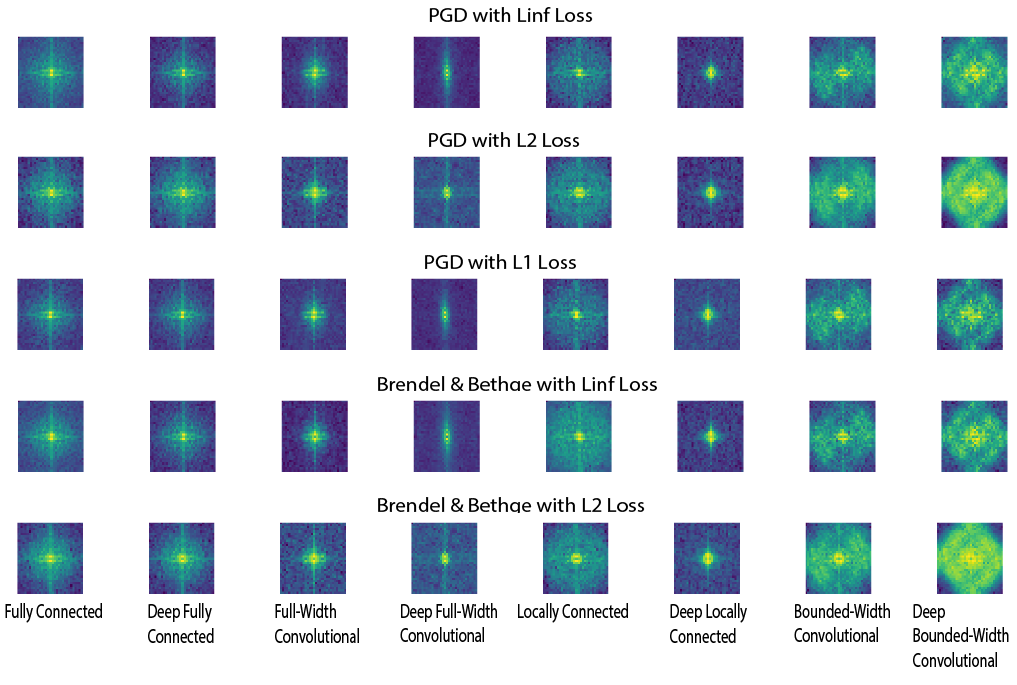}
    \caption{Average Adversarial Perturbation Fourier Spectrum for Fully Connected, Full-Width, Locally Connected and Bounded-Width \textbf{Linear} models.}
    \label{fig:attacks_cifar10}
\end{suppfigure}

\begin{suppfigure}[h]
    \centering
    \includegraphics[width=0.9\textwidth]{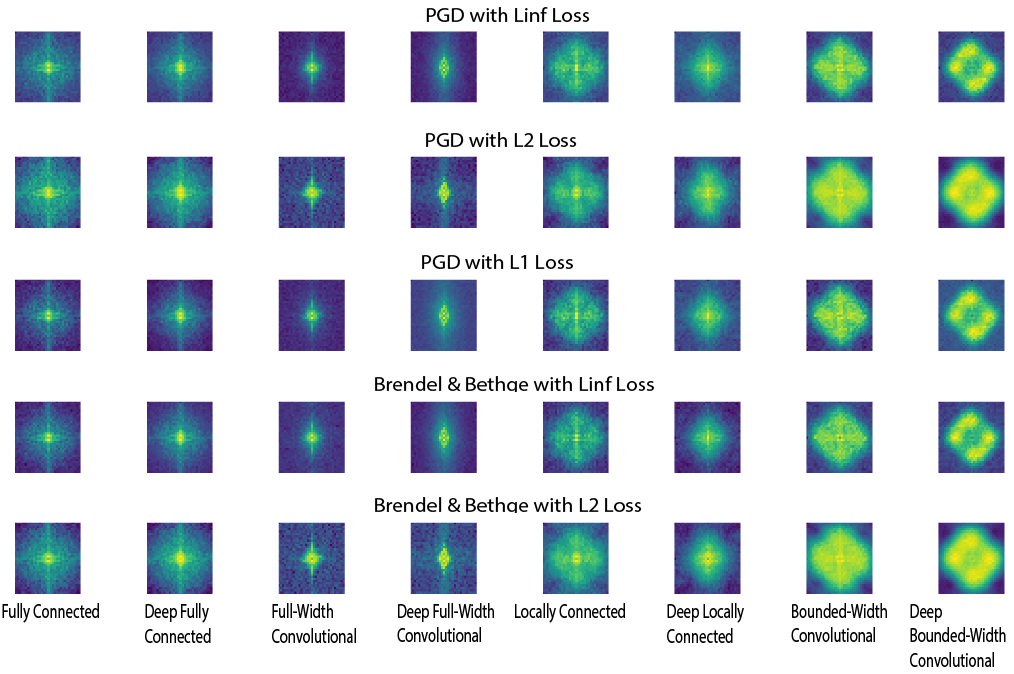}
    \caption{Average Adversarial Perturbation Fourier Spectrum for Fully Connected, Full-Width, Locally Connected and Bounded-Width \textbf{NonLinear} models.}
    \label{fig:attacks_cifar10_relu}
\end{suppfigure}

\clearpage
\subsection{Tables}

        


\begin{supptable}[h]
\small
\centering
\caption{Frequency-Domain Norms of Predictor $\hat{\beta}$ and Adv. Pert. $\hat{\delta}$ for Linear Models. With Equivalent performance (See Sup Figure~\ref{tbl:accuracy}), Convolutional Models have smaller $\| \hat{\beta} \|_1$, supporting the implicit Fourier hypothesis.}
\begin{adjustbox}{max width=\textwidth}
\begin{tabular}{|l| c| c| c| c|}
\hline
\textbf{Model} & $\| \hat{\delta} \|_2$ &$\| \hat{\delta} \|_1$ & $\| \hat{\beta} \|_2$ & $\| \hat{\beta} \|_1$\\
\hline

Fully Connected & 20.77 $\pm$ 1.23 & 505.95 $\pm$ 32.22 & 546.89 $\pm$ 43.37 & 2232.46 $\pm$ 166.27 \\
\hline
Full Width Convolution & 17.52 $\pm$ 1.56 & 376.31 $\pm$ 32.66 & 544.83 $\pm$ 42.40 & 1598.37 $\pm$ 181.44 \\
\hline
Bounded Width Convolution & 20.65 $\pm$ 1.09 & 556.32 $\pm$ 32.64 & 500.67 $\pm$ 47.46 & 2623.04 $\pm$ 331.42 \\
\hline
Deep Fully Connected & 20.83 $\pm$ 1.25 & 503.92 $\pm$ 32.62 & 554.16 $\pm$ 38.95 & 2168.06 $\pm$ 134.07 \\
\hline
Deep Full Width Convolution & 17.13 $\pm$ 1.68 & 265.40 $\pm$ 41.72 & 522.99 $\pm$ 39.72 & 1417.68 $\pm$ 213.91 \\
\hline
Deep Bounded Width Convolution & 20.57 $\pm$ 0.94 & 550.30 $\pm$ 29.71 & 498.52 $\pm$ 47.15 & 2392.09 $\pm$ 278.55 \\
\hline
\end{tabular}
\end{adjustbox}
\label{tbl:linear_models}
\end{supptable}

\begin{supptable}[h]
\small
\centering
\caption{Frequency-Domain Norms of Adv. Pert. $\hat{\delta}$ for Nonlinear Models. With similar performance (Supp. Table~\ref{tbl:linear_models}), Full Width and Bounded Width convolutional models produce adversarial attacks with smaller $\| \hat{\delta} \|_1$.}
\begin{tabular}[h]{|l| c| c|}
\hline
\textbf{Model} & $\| \hat{\delta} \|_2$ &$\| \hat{\delta} \|_1$ \\
\hline
Fully Connected & 22.06 $\pm$ 1.35 & 534.34 $\pm$ 35.93 \\
\hline
Locally Connected & 22.32 $\pm$ 1.13 & 587.61 $\pm$ 29.89 \\
\hline
Full Width Convolution & 20.57 $\pm$ 1.11 & 405.49 $\pm$ 34.16 \\
\hline
Bounded Width Convolution & 24.63 $\pm$ 1.36 & 665.22 $\pm$ 38.55 \\
\hline
Shallow ResNet & 26.36 $\pm$ 0.74 & 709.06 $\pm$ 20.76 \\
\hline
VGG19 & 23.32 $\pm$ 0.99 & 650.73 $\pm$ 27.87 \\
\hline
ResNet18 & 22.42 $\pm$ 0.91 & 626.60 $\pm$ 25.40 \\
\hline
\end{tabular}

\label{tbl:nonlinear_models}
\end{supptable}

\clearpage
\section{Experimental Details}\label{sec:supp-configs}

\subsection{Model Architecture}\label{sec:architectures}

\begin{supptable}[h]
\begin{center}
\caption{Model Architectures}\label{tbl:architectures}
\begin{adjustbox}{max width=\textwidth}
\begin{tabular}{|c| c| c| c|}
\hline
    \textbf{Model Architecture} & \textbf{$\#$ Hidden Layers} & \textbf{Nonlinearity} & \textbf{Channels} \\
    \hline
    Fully Connected & 1,3 & None, ReLU & 3072\\
    Bounded Width Convolution & 1,3 & None, ReLU & 32\\
    Full Width Convolution & 1,3 & None, ReLU & 32\\
    Locally Connected & 1,3 & None, ReLU & 32\\
    \hline

\end{tabular}
\end{adjustbox}
\end{center}
\end{supptable}

\begin{supptable}[H]
\begin{center}
\caption{Model Configurations for ImageNet Trained Models. All models were pulled from the timm package.}
\label{tbl:accuracy}
\begin{tabular}{|c| c|c|}
\hline
    \textbf{Model} & \textbf{Model Type} &\textbf{Model timm package name} \\
    \hline
    ResNet50 \cite{} & Convolutional & resnet50d\\
    EfficientNet \cite{} & Convolutional & tf\_efficientnet\_b0\_ns\\
    RepVGG \cite{} & Convolutional & repvgg\_b3\\
    ConvViT \cite{} & Hybrid & convit\_base\\
    ViT-ResNet50 \cite{} & Hybrid & vit\_large\_r50\_s32\_224\\
    Coat \cite{} & Hybrid & coat\_lite\_small\\
    ResMLP (36) \cite{} & MLP & resmlp\_36\_distilled\_224\\
    gMixer \cite{} & MLP & gmixer\_24\_224 \\
    MLPMixer Large \cite{} & MLP & mixer\_b16\_224 \\
    ViT (8) \cite{} & ViT & vit\_base\_patch8\_224 \\
    ViT (16) \cite{} & ViT & vit\_base\_patch16\_224\\
    ViT (32) \cite{} & ViT & vit\_base\_patch32\_224 \\
    \hline
\end{tabular}
\end{center}
\end{supptable}

All ImageNet models were pull from the timm package \cite{}. Furthermore, all models were trained with similar data augmentations, and adversarial attack evaluation was done with default preprocessing from $model.default_cfg$.

\subsection{Model Performance} \label{sec:supp-performance}

\begin{supptable}[H]
\begin{center}
\caption{Test Accuracy for all models trained on CIFAR-10, CIFAR100, MNIST, FashionMNIST, SVHN.}
\label{tbl:accuracy}
\begin{adjustbox}{max width=\textwidth}
\begin{tabular}{llllll}
\toprule
{}              &   \multicolumn{5}{c}{Test Accuracy (\%)}                \\
\midrule
Models &  FashionMNIST &  MNIST &  SVHN &  cifar10 &  cifar100 \\
\midrule

Fully Connected                       &          86.9 &   92.0 &  26.5 &     39.6 &      15.8 \\
Full-Width Convolution                &          86.6 &   91.0 &  28.7 &     40.5 &      17.7 \\
Locally Connected                     &          86.4 &   92.0 &  28.5 &     40.7 &      18.3 \\
Bounded-Width Convolution             &          86.4 &   92.0 &  28.2 &     40.2 &      16.1 \\
Deep Full-Width Convolution           &          86.5 &   91.8 &  26.0 &     41.7 &      18.9 \\
Deep Fully Connected                  &          86.7 &   92.1 &  23.8 &     39.2 &      14.7 \\
Deep Locally Connected                &          84.1 &   92.0 &  29.0 &     41.8 &      18.8 \\
Deep Bounded-Width Convolution        &          86.1 &   92.0 &  27.7 &     39.9 &      14.9 \\
\midrule
Fully Connected (ReLU)                &          88.4 &   97.2 &  77.6 &     43.6 &      15.4 \\
Full-Width Convolution (ReLU)         &          84.8 &   92.8 &  85.9 &     47.7 &      19.2 \\
Locally Connected (ReLU)              &          87.8 &   95.7 &  82.4 &     53.8 &      22.5 \\
Bounded-Width Convolution (ReLU)      &          90.5 &   98.0 &  83.2 &     59.7 &      29.0 \\
Deep Full-Width Convolution (ReLU)    &          87.8 &   97.4 &  86.3 &     51.7 &      22.2 \\
Deep Fully Connected (ReLU)           &          89.1 &   97.9 &  67.3 &     50.8 &      10.5 \\
Deep Locally Connected (ReLU)         &          86.5 &   96.0 &  86.9 &     57.9 &      18.3 \\
Deep Bounded-Width Convolution (ReLU) &          91.7 &   98.8 &  86.7 &     66.0 &      29.6 \\
\midrule
ViT-2 &  & & &79.8 &\\
ViT-4 &  & & &80.1 &\\
ViT-8 & & & & 78.4 &\\
ViTLoc-2 &  & & &75.3 &\\
ViTLoc-4 &  & & &80.2 &\\
ViTLoc-8 &  & & &74.5 &\\
ResNet18  &  & & &91.8 &\\
\bottomrule
\end{tabular}
\end{adjustbox}
\end{center}
\end{supptable}

\begin{supptable}[H]
\begin{center}
\caption{Test Accuracy for all models trained on ImageNet.}
\label{tbl:accuracy}
\begin{tabular}{lrr}
\toprule
                  model &   top1 &   top5 \\
\midrule
    vit\_base\_patch8\_224 & 85.794 & 97.794 \\
   vit\_base\_patch16\_224 & 84.528 & 97.294 \\
  vit\_large\_r50\_s32\_224 & 84.424 & 97.166 \\
        coat\_lite\_small & 82.304 & 95.848 \\
            convit\_base & 82.286 & 95.938 \\
resmlp\_36\_distilled\_224 & 81.154 & 95.488 \\
   vit\_base\_patch32\_224 & 80.722 & 95.566 \\
              resnet50d & 80.522 & 95.162 \\
              repvgg\_b3 & 80.496 & 95.264 \\
  tf\_efficientnet\_b0\_ns & 78.658 & 94.378 \\
          gmixer\_24\_224 & 78.036 & 93.670 \\
          mixer\_b16\_224 & 76.612 & 92.228 \\
\bottomrule
\end{tabular}
\end{center}
\end{supptable}

\subsection{Training Hyperparameters} \label{sec:supp-training-configs}

\begin{supptable}[h]
\begin{center}
\caption{Learning rates for the various models considered on CIFAR-10. All other hyper-parameters were fixed.}
\label{tbl:lr}
\begin{adjustbox}{max width=\textwidth}
\begin{tabular}{|c| c| c| c|}
\hline
    \textbf{Model Architecture} & \textbf{Learning Rate} & \textbf{Batch Size} & \textbf{Learning Rate Drop} \\
    \hline
    Bounded Width Convolution  & .01 & 128 & Yes\\
    Fully Connected & .01 & 128 & Yes\\
    Locally Connected  & .01 & 128 & Yes\\
    Full Width Convolution  & .002 & 128 & Yes\\
    Bounded Width Convolution & .002 & 128 & Yes\\
    \hline

\end{tabular}
\end{adjustbox}
\end{center}
\end{supptable}

\begin{supptable}[H]
\begin{center}
\caption{Learning rates for the various models considered on CIFAR-100. All other hyper-parameters were fixed.}
\label{tbl:lr_cifar100}
\begin{adjustbox}{max width=\textwidth}
\begin{tabular}{|c| c| c| c|}
\hline
    \textbf{Model Architecture} & \textbf{Learning Rate} & \textbf{Batch Size} & \textbf{Learning Rate Drop} \\
    \hline
    Bounded Width Convolution  & .01 & 128 & Yes\\
    Fully Connected & .01 & 128 & Yes\\
    Locally Connected  & .01 & 128 & Yes\\
    Full Width Convolution  & .002 & 128 & Yes\\
    Bounded Width Convolution & .002 & 128 & Yes\\
    \hline

\end{tabular}
\end{adjustbox}
\end{center}
\end{supptable}

\begin{supptable}[h]
\begin{center}
\caption{Learning rates for the various models considered on MNIST. All other hyper-parameters were fixed.}
\label{tbl:lr_mnist}
\begin{adjustbox}{max width=\textwidth}
\begin{tabular}{|c| c| c| c|}
\hline
    \textbf{Model Architecture} & \textbf{Learning Rate} & \textbf{Batch Size} & \textbf{Learning Rate Drop} \\
    \hline
    Bounded Width Convolution  & .01 & 100 & Yes\\
    Fully Connected & .01 & 100 & Yes\\
    Locally Connected  & .01 & 100 & Yes\\
    Full Width Convolution  & .002 & 100 & Yes\\
    Bounded Width Convolution & .002 & 100 & Yes\\
    \hline

\end{tabular}
\end{adjustbox}
\end{center}
\end{supptable}

\begin{supptable}[h]
\begin{center}
\caption{Learning rates for the various models considered on FashionMNIST. All other hyper-parameters were fixed.}
\label{tbl:lr_mnist}
\begin{adjustbox}{max width=\textwidth}
\begin{tabular}{|c| c| c| c|}
\hline
    \textbf{Model Architecture} & \textbf{Learning Rate} & \textbf{Batch Size} & \textbf{Learning Rate Drop} \\
    \hline
    Bounded Width Convolution  & .01 & 100 & Yes\\
    Fully Connected & .01 & 100 & Yes\\
    Locally Connected  & .01 & 100 & Yes\\
    Full Width Convolution  & .002 & 100 & Yes\\
    Bounded Width Convolution & .002 & 100 & Yes\\
    \hline

\end{tabular}
\end{adjustbox}
\end{center}
\end{supptable}

\begin{supptable}[H]
\begin{center}
\caption{Learning rates for the various models considered on SVHN. All other hyper-parameters were fixed.}
\label{tbl:lr_cifar100}
\begin{adjustbox}{max width=\textwidth}
\begin{tabular}{|c| c| c| c|}
\hline
    \textbf{Model Architecture} & \textbf{Learning Rate} & \textbf{Batch Size} & \textbf{Learning Rate Drop} \\
    \hline
    Bounded Width Convolution  & .01 & 128 & Yes\\
    Fully Connected & .01 & 128 & Yes\\
    Locally Connected  & .01 & 128 & Yes\\
    Full Width Convolution  & .002 & 128 & Yes\\
    Bounded Width Convolution & .002 & 128 & Yes\\
    \hline

\end{tabular}
\end{adjustbox}
\end{center}
\end{supptable}

\subsection{Adversarial Attack Configurations}\label{sec:supp-attack-configs}

\begin{supptable}[h]
\begin{center}
\caption{Adversarial Attack hyperparameters for CIFAR10, SVHN, CIFAR100, MNIST and FashionMNIST}
\begin{adjustbox}{max width=\textwidth}
\begin{tabular}{|c| c| c| c| c|}
\hline
    \textbf{Attack} & \textbf{Metric} & \textbf{Learning Rate} & \textbf{Number of Steps} & Max Norm, $\epsilon$  \\
    \hline
    Projected Gradient Descent & $L_{\infty}$ & 0.1 & 1000  & 8.0/255.0\\
     \hline
    Projected Gradient Descent & $L_{2}$ & 0.1 & 1000  & 2.0\\
    \hline
    Projected Gradient Descent & $L_{1}$ & 0.1 & 200  & 0.1\\
    \hline
    Brendel-Bethge Attack & $L_{\infty}$ & 1e-03 & 1000  & -\\
    \hline
    Brendel-Bethge Attack  & $L_{2}$ & 1e-03 & 1000  & -\\
    \hline
\end{tabular}
\end{adjustbox}\label{tbl:hyper_attack}
\end{center}
\end{supptable}

\textbf{Learning Rates.} All the models adversarial attacks were generated using the configuration above with the Foolbox package \protect\cite{rauber2017foolbox}.

\section{Formal Proofs of High Frequency Bias} \label{theory}
\thmucp*
\begin{proof}
Let us first concentrate on a single convolutional filter $w_{l} \in \mathbb{R}^{D}$.
Given an arbitrary choice of frequency interval $\Omega := \{-k, \ldots, 0, \ldots, +k \}$ and space interval $S := \{-a, \ldots, 0, \ldots, +a  \}$, we want to prove that reducing the energy fraction in the complementary set $S^{c}$ implies that we must increase the energy fraction in the complementary set $\Omega^{c}$.
The result follows from a direct application of the Uncertainty Principle for finite-dimensional vector spaces, as shown e.g. in Ghobber-Jaming~\cite{ghobber2011uncertainty}.
In particular let $\hat{w} \in \mathbb{R}^D$ be the coefficients of the Discrete Fourier Transform (DFT) of a convolutional filter $w \in \mathbb{R}^D$.
From equation 1.2 in \cite{ghobber2011uncertainty} we have 
\begin{equation}\label{UP}
  \| w \|_{\ell^2(S^{c})}+ \|\hat{w}\|_{\ell^2(\Omega^{c})} \geq \| w \|_{2} C(S,\Omega)
\end{equation}
where $C(S,\Omega)$ is constant when the intervals $S,\Omega$ are fixed. 
Dividing both sides of the inequality by $\| w \|_{2}$ we have
\begin{equation*}
   \kappa(S^{c}) + \hat{\kappa}(\Omega^{c}) \geq \textrm{const}
\end{equation*}
where $\kappa(\mathcal{A}) := \| w \|_{\ell^2(\mathcal{A})} / \| w \|_2$ is the spatial energy concentration of $w$ in the index set $\mathcal{A}$ and $\hat{\kappa}(\mathcal{B})$ is the frequency energy concentration of $\hat{w}$ in the set $\mathcal{B}$. 
Thus increasing the energy concentration in the spatial interval $S$ will cause a decrease in $S^{c}$ and by the inequality above an increase in $\kappa (\Omega^{c})$. If we let $\Omega$ be an interval of `low' frequencies, then we conclude there will be an increase in the energy concentration in the `high' frequencies $\Omega^c$.\\

The reasoning above can be extended from a single convolutional filter to the full end-to-end weights vector, $\beta := \star_{l=1}^{L-1} w_l$,  as follows.
Note first that, using the convolution theorem, the Discrete Fourier transform of $\beta$ is the Hadamard product of the Discrete Fourier transforms of the per-layer weights $w_l$ i.e.
\begin{equation*}
  \hat{\beta}=\hat{w}_{L-1}\odot\cdots\odot \hat{w}_{1}.   
\end{equation*}
Let us consider the energy in a set of `low' frequencies $\Omega$:
\begin{equation*}
  \hat{\beta}_{\Omega}=\hat{w}_{L-1,\Omega}\odot\cdots\odot \hat{w}_{1,\Omega}.   
\end{equation*}
Taking the $\ell_{2}$ norm and invoking the inequality $\|a\odot b\|_{2}\leq \|a\|_{2}\|b\|_{2}$ a total of $L-1$ times we can then write 
\begin{equation*}
  \kappa(\Omega,\beta)\leq \prod_{l=1}^{L-1}\kappa(\Omega,w_{l}). 
\end{equation*}
Suppose now that, all else equal, we decrease the energy concentration in each spatial domain $S$ of the per-layer filters $w_{l}$. 
By the reasoning above this will increase the energy concentration in frequency domain in the interval $\Omega^{c}$ i.e. a decrease in $\kappa(\Omega,w_{l})$ for each layer $l$.
By the last inequality this will decrease $\kappa(\Omega,\beta)$, resulting in an increase in the energy concentration in the high frequencies ($\Omega^{c}$) for $\beta$.
\end{proof}

\begin{lemma}
Concentrating the kernel energy in spatial domain increases the implicit regularization term in the optimization in [Gunasekar]:
\begin{equation*}
\forall a’ < a: R_{BWC; a’}(\beta) \geq R_{BWC; a}(\beta)
\end{equation*}
\end{lemma}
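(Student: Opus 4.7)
The plan is to prove this by a standard nested-feasible-set argument: shrinking the allowed spatial support of each filter shrinks the set of admissible factorizations of $\beta$, so any infimum-over-factorizations definition of the implicit regularizer can only increase. This is the natural analogue, in the bounded-width setting, of the observation made in the preceding theorem that decreasing spatial support reduces the degrees of freedom available to the optimization.

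First, I would make the regularizer explicit by extending Gunasekar et al.'s variational characterization to the bounded-width case. Write
\begin{equation*}
R_{BWC;a}(\beta) \;=\; \inf_{w_1,\ldots,w_{L-1}} \; Q(w_1,\ldots,w_{L-1}) \quad \text{s.t.} \quad \beta = \star_{l=1}^{L-1} w_l, \;\; \mathrm{supp}(w_l) \subseteq W_a,
\end{equation*}
where $W_a := \{-a,\ldots,0,\ldots,+a\}$ is the allowed spatial window and $Q$ is the scalar objective whose minimization is implicitly induced by gradient flow in the overparameterized bounded-width parametrization (in the full-width case, minimizing $Q$ recovers $\|\hat\beta\|_{2/L}$, as in the original Gunasekar result invoked before the theorem). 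We adopt the convention $R_{BWC;a}(\beta) = +\infty$ when no admissible factorization of $\beta$ exists.

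Next, for any $a' < a$, clearly $W_{a'} \subseteq W_a$, so every tuple $(w_1,\ldots,w_{L-1})$ with $\mathrm{supp}(w_l) \subseteq W_{a'}$ automatically satisfies $\mathrm{supp}(w_l) \subseteq W_a$. Hence the feasible set of the $a'$-optimization is contained in that of the $a$-optimization, and taking an infimum of the same objective $Q$ over a subset can only produce a value at least as large as the infimum over the superset. This yields $R_{BWC;a'}(\beta) \geq R_{BWC;a}(\beta)$, which is the claim.

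The main obstacle is essentially bookkeeping rather than mathematical depth: one must spell out the bounded-width extension of the Gunasekar et al.\ implicit-bias characterization precisely enough that the variational form above is justified, and verify the $+\infty$ convention is consistent (it is, since the infimum of any function over the empty feasible set dominates any infimum over a superset). An alternative, more direct route would invoke the theorem above — reducing $a$ spreads the Fourier energy of each $\hat{w}_l$ (and hence of $\hat\beta = \hat{w}_{L-1} \odot \cdots \odot \hat{w}_1$) away from any fixed low-frequency window, which ought to inflate the sparsity-penalizing $\ell_{2/L}$ quasi-norm — but quantifying how such energy redistribution translates into a monotone increase of the quasi-norm is strictly harder than the feasible-set argument, so I would prefer the latter.
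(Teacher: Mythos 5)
Your proposal is correct and matches the paper's own argument: the paper's proof likewise rests on the observation that shrinking the filter support makes the space-limiting constraints ``more stringent,'' so the optimization defining the implicit regularizer can only increase in cost, which is precisely the nested-feasible-set argument you formalize via the infimum over admissible factorizations. If anything, your version is cleaner, since you isolate the set-inclusion step ($W_{a'} \subseteq W_a$) as the sole load-bearing fact and correctly identify the remaining gap (justifying the variational characterization of $R_{BWC;a}$ in the bounded-width setting) that the paper also leaves implicit.
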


\begin{proof}
Reducing filter size $K$ will increase energy in high freqs i.e. $\forall K’ < K: \kappa_{high}(\beta;K’) > \kappa_{high}(\beta;K)$. This means that the space-limiting constraints only grow more stringent as we reduce $K$, implying that the result of the optimization problem for the implicit regularizer will only increase in cost i.e. $\forall K’ < K: R_{BWC; K’}(\beta) \geq R_{BWC; K}(\beta)$ for any candidate linear predictor $\beta$. (Note that this does not refer to the learned features $\beta^*$ which actually depends on the training data as well). In summary, all else being equal, reducing the kernel size $K$ causes/induces a bias towards more concentration of energy in higher frequencies in $\beta$.
\end{proof}
\noindent
In summary reducing the kernel size causes/induces a bias towards more concentration of energy in higher frequencies in $\beta$.

\end{document}